\documentclass[10pt, twocolumn, twoside]{article}

\usepackage[top=2.5cm, bottom=2.5cm, left=2cm, right=2cm]{geometry}

\usepackage{times}           
\usepackage[scaled=0.92]{helvet}  
\usepackage{courier}         

\usepackage[english]{babel}
\usepackage[utf8]{inputenc}
\usepackage[T1]{fontenc}

\usepackage{graphicx}        
\usepackage{epsfig}          
\usepackage{epstopdf}        
\usepackage{subcaption}      
\usepackage{float}           
\usepackage{wrapfig}         
\usepackage{stfloats}
\usepackage{amsmath, amssymb, amsthm, enumitem}
\usepackage{amsfonts} 
\usepackage{bm}              
\usepackage{mathtools}       

\newtheorem{theorem}{Theorem}[section]

\newtheorem{corollary}[theorem]{Corollary}

\theoremstyle{definition}
\newtheorem{definition}[theorem]{Definition}

\newtheorem{remark}[theorem]{Remark}

\usepackage{array}           
\usepackage{booktabs}        
\usepackage{multirow}        
\usepackage{tabularx}        
\usepackage{threeparttable}  

\usepackage{algorithm}       
\usepackage{algorithmic}     
\usepackage{listings}        
\usepackage{color}           

\usepackage[numbers,sort&compress]{natbib}

\usepackage[colorlinks=true, citecolor=blue, linkcolor=blue, urlcolor=blue]{hyperref}
\usepackage[capitalize, nameinlink, noabbrev]{cleveref}  

\usepackage{fancyhdr}
\usepackage{titlesec}
\titleformat{\section}
    {\large\bfseries\sffamily}
    {\thesection}
    {1em}
    {}
\titleformat{\subsection}
    {\normalsize\bfseries\sffamily}
    {\thesubsection}
    {1em}
    {}
\titleformat{\subsubsection}
    {\small\bfseries\sffamily}
    {\thesubsubsection}
    {1em}
    {}

\usepackage{caption}
\usepackage[bottom]{footmisc}  

\usepackage{abstract}

\makeatletter
\renewcommand{\maketitle}{
    \begin{center}
        {\huge\bfseries\sffamily \@title \par}
        \vspace{0.8em}
        {\large \@author \par}
        \vspace{0.5em}
        {\small \@date \par}
    \end{center}
}
\makeatother

\usepackage{enumitem}
\setlist{nosep, leftmargin=*}
\setlist[itemize]{itemsep=2pt, topsep=2pt}
\setlist[enumerate]{itemsep=2pt, topsep=2pt}

\usepackage{url}

\hypersetup{
    pdftitle={Emotion Classification from Text: A Comparative Study},
    pdfauthor={Your Name},
    pdfsubject={Machine Learning},
    pdfkeywords={emotion classification, BERT, BiLSTM, sentiment analysis},
    pdfstartview={FitH}
}

\title{A Statistical Inference Framework for PMI Estimation and SGNS Word Embeddings
    \vspace{1cm}
    
}

\author{
    \textbf{Zhongqi FAN} \\
    Beijing Normal-Hong Kong Baptist University \\
    \texttt{u430033011@mail.bnbu.edu.cn}
}

\date{}

\begin{document}

\twocolumn[
    \begin{@twocolumnfalse}
        \maketitle
        \begin{abstract}
            \noindent
Pointwise Mutual Information (PMI) is a core measure of testing word 
association, and Skip-gram with Negative Sampling (SGNS) is essentially a method that implicitly factorizing a shifted PMI matrix. However, a systematic and well-rounded 
characterization of finite-sample uncertainty in PMI estimation remains 
absent and imperative to venture into.

We provide a statistical framework for PMI estimation and its connection 
to SGNS. We prove consistency, asymptotic unbiasedness, and asymptotic 
normality of the empirical PMI estimator, derive its variance via the 
Delta method, and—applying stochastic approximation theory—obtain a 
variance decomposition for SGNS-based PMI estimation:$\operatorname{Var}(\hat\theta_{\text{SGNS}}) = \frac{e^{-\theta}}{N p_x p_y} + \frac{C(\theta)}{K}$, separating data variance from optimization variance.

Simulation experiments validate the Delta method approximation. 
Real-data experiments on the Brown Corpus ($d=100$) reveal that SGNS 
systematically deviates from the theoretical relationship 
$\hat\theta = \text{PMI} + \log K$ (Levy \& Goldberg, 2014). The empirical 
relationship is $\hat\theta \approx 0.70 \cdot \text{PMI} + 1.57 \cdot \log K + 1.57$: 
the PMI coefficient is attenuated, the $\log K$ effect is amplified 
($\hat\beta \approx 1.57$ vs. $1.00$, $p < 0.001$), and a positive intercept 
($\hat C \approx 1.57$, $p < 0.001$) indicates systematic bias. Word analogy 
validation confirms the models are effective.

The failure to validate the variance decomposition under low-dimensional 
conditions does not diminish and reject its theoretical value; rather, it identifies 
the unbiasedness assumption as the key bottleneck and clarifies the gap 
between asymptotic theory and practice, providing implications for both practice and theory.

\noindent\textbf{Keywords:} Pointwise Mutual Information Skip-gram with 
Negative Sampling Variance Decomposition
        \end{abstract}
        \vspace{0.5cm}
    \end{@twocolumnfalse}
]


\section{Introduction}

\subsection{Background}
Natural Language Processing (NLP) algorithms have achieved great success, especially since the emergence of Large Language Models (LLMs) in recent years. One of the significant components of NLP models is textual data mining or text vectorization. Measurement of the strength of the association between words is the foundation of text vectorization models. Pointwise mutual information (PMI) is an important information measure, which is introduced in the field of computational linguistics as an information theory metric for quantifying word association in 1990\cite{church1990word}.

Research has found that PMI is the implicit targeting of skip-gram with negative-sampling (SGNS)\cite{levy2014neural}, and GloVe and Word2Vec are correlated with factorization of the PMI matrix\cite{qi2024correspondence}. In addition, Another study illustrated that PMI tends to generate inflated and even exaggerated correlation scores in low-frequency word pairs\cite{Daudaravicius2004}, revealing the considerable potential risks of relying solely on PMI point estimates for judgment.

\subsection{Statistic perspective}
In order to conduct further exploration of the stability and generalization of PMI, studying it from a statistical perspective has become a new dimension. The statistician restored the Skip-gram with the Negative Sampling algorithm of Word2Vec using probability theory, clarifying the theoretical connection between this algorithm and PMI\cite{egleston2021statistical}. Czyż et al. proposed the concept of PMI profile the complete distribution of PMI as a random variable, and its expected value is precisely mutual information\cite{czyz2023mixtures}. From the perspective of text bias detection, Valentini et al. proposed a PMI based measurement method and presented a systematically statistical inference framework\cite{valentini-etal-2023-interpretability}.

\subsection{The Present Gap}

Although these studies have advanced our understanding of word embeddings 
and PMI, the \textbf{finite-sample variance structure} of PMI estimators 
and how this uncertainty \textbf{propagates} through SGNS optimization 
has not been systematically and well-rounded characterized. Moreover, the behavior of 
SGNS under \textbf{practical low-dimensional training conditions} remains 
largely unexplored and absent. This paper addresses these gaps by deriving a 
variance decomposition for SGNS-based PMI estimation and empirically 
characterizing its behavior under low-dimensional conditions.

\subsection{Our contribution}
This paper provides a statistical strcture for PMI estimation and its connection to SGNS. We formally prove consistency, asymptotic unbiasedness, and asymptotic normality of empirical PMI, derive a clear variance expression via the Delta method, and extend the analysis to SGNS by decomposing its estimation variance into data sampling and optimization components. Simulation experiments validate the theoretical variance approximation. 

However, real-data experiments on the Brown Corpus show that, under low-dimensional training (d=100), SGNS systematically deviates from theoretical predictions—exhibiting attenuated PMI coefficients, amplified logK effects, and positive bias. Critically, this failed validation does not diminish the theory and reject the academic value; rather, it points out the unbiasedness assumption as the biggest issue, clarifies the gap between asymptotic theory and practice, and gives a baseline for measuring actual deviations.


\section{Methodology}
\label{sec:methods}

\subsection{Preliminary}
We adopt Pointwise Mutual Information (PMI) to quantify the association strength between word pairs\cite{church1990word}.
\begin{definition}\label{PMI}
    For a word pair $(x, y)$, PMI is defined as:
    \[
    \text{PMI}(x,y) = \log \frac{P(x,y)}{P(x)P(y)} 
    \]
    Where $P(x,y)$ is the joint probability of $x$ and $x$, $P(x)$ and $P(y)$ are their marginal probabilities.
\end{definition}
And we define the empirical estimation, as follows
\begin{definition}\label{empirPMI}
The empirical estimation of PMI is
\[
   \widehat{\text{PMI}}(x,y) = \log \frac{\hat P_{xy}}{\hat P_x \hat P_y}
\]
where $\hat P_{xy} = n_{xy} /N$, $\hat P_x = n_{x} /N$, $\hat P_y = n_{y} /N$
\end{definition}

\subsection{Descriptive Statistical Analysis}
To simplify our analysis, assume that each word token is drawn independently from a fixed distribution, and probability$ P_{xy}, P_x, P_y >0$.
\subsubsection{Proof of Consistency}
\begin{theorem}\label{consistency}
    $\widehat{\text{PMI}}(x,y)$ is an estimate of a parameter  $\text{PMI}(x,y)$ based on a sample of size $N$,
    \[
    \widehat{\text{PMI}}(x,y) \xrightarrow{\mathbb{P}} \text{PMI}(x,y)
    \]
    or, more precisely,
    \[
    \mathbb{P}(| \widehat{\text{PMI}} - \text{PMI}| \to 0 \text{ as } N \to \infty
    \]
\end{theorem}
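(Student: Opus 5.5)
The plan is to combine the weak law of large numbers for the empirical frequencies with the continuous mapping theorem. Under the standing assumption that tokens (or token pairs) are drawn i.i.d.\ from a fixed distribution, each of $n_{xy}$, $n_x$, $n_y$ is a sum of $N$ Bernoulli indicators. Their success probabilities are $P_{xy}$, $P_x$, $P_y$ respectively. Hence $\hat P_{xy} = n_{xy}/N$ is a sample mean of i.i.d.\ bounded variables with mean $P_{xy}$. The weak law of large numbers, or Chebyshev's inequality with $\operatorname{Var}(\hat P_{xy}) = P_{xy}(1-P_{xy})/N \to 0$, then gives $\hat P_{xy} \xrightarrow{\mathbb{P}} P_{xy}$. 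The same argument gives $\hat P_x \xrightarrow{\mathbb{P}} P_x$ and $\hat P_y \xrightarrow{\mathbb{P}} P_y$. Convergence of each coordinate in probability implies joint convergence of the vector $(\hat P_{xy}, \hat P_x, \hat P_y)$ in probability, by a union bound on the coordinatewise deviations.

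Next I would write $\widehat{\text{PMI}} = g(\hat P_{xy}, \hat P_x, \hat P_y)$ with $g(a,b,c) = \log a - \log b - \log c$. This function is continuous on the open set $(0,\infty)^3$. Since $P_{xy}, P_x, P_y > 0$ by assumption, the limit point lies in the interior of this domain. The continuous mapping theorem then yields $g(\hat P_{xy}, \hat P_x, \hat P_y) \xrightarrow{\mathbb{P}} g(P_{xy}, P_x, P_y) = \text{PMI}(x,y)$. This is the displayed claim, read as $\mathbb{P}(|\widehat{\text{PMI}} - \text{PMI}| > \varepsilon) \to 0$ for every $\varepsilon > 0$.

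The main obstacle is that $\widehat{\text{PMI}}$ is undefined, or equal to $-\infty$, on the event that some count is zero, for instance $n_{xy} = 0$. I would handle this by assigning an arbitrary value on that event and bounding its probability. For example, $\mathbb{P}(n_{xy}=0) = (1-P_{xy})^N \to 0$, and the same holds for $n_x$ and $n_y$. The exceptional event therefore has vanishing probability and does not affect convergence in probability.

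To avoid relying on the continuous mapping theorem at a boundary, I could make the argument self-contained. Fix $\delta = \tfrac12 \min(P_{xy}, P_x, P_y)$ and work on the event that all three estimates lie within $\delta$ of their targets. This event has probability tending to $1$. On it, $\log$ is Lipschitz with constant $1/\delta$, so
\[
|\widehat{\text{PMI}} - \text{PMI}| \le \delta^{-1}\bigl(|\hat P_{xy}-P_{xy}| + |\hat P_x - P_x| + |\hat P_y - P_y|\bigr),
\]
and the right-hand side tends to $0$ in probability by the first step.
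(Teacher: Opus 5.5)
Your proposal is correct and takes essentially the same route as the paper. Both apply the law of large numbers to $\hat P_{xy},\hat P_x,\hat P_y$ and then use continuity of $g(a,b,c)=\log a-\log b-\log c$ at $(P_{xy},P_x,P_y)$: the paper through the continuous mapping theorem and an $\epsilon$--$\delta$ bound of $\mathbb{P}(|\widehat{\text{PMI}}-\text{PMI}|\ge\epsilon)$ by the probability that the frequency vector deviates by at least $\delta$, you through the same theorem or a local Lipschitz bound for $\log$, and your explicit handling of the zero-count event, which the paper passes over, is a worthwhile tightening that your Lipschitz version covers automatically since every $\hat P_v\ge\delta>0$ on the good event.
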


\begin{proof}
    By the law of large numbers (LLN), if sample size $N \to \infty$
    \begin{align*}
        &\hat P_{xy} = n_{xy} /N \xrightarrow{\mathbb{P}} P_{xy}\\
        &\hat P_{x} = n_{x} /N \xrightarrow{\mathbb{P}} P_{x} \\
        &\hat P_{y} = n_{y} /N\xrightarrow{\mathbb{P}} P_{y} \\
    \end{align*}
    Let $g(a,b,c) = \log \frac{a}{bc}$ with $a,b,c \in (0,1)$ be a continuous function. By the continuous mapping theorem for convergence in probability if $X_n \xrightarrow{\mathbb{P}} X$, $g(X_n) \xrightarrow{\mathbb{P}} g(X)$. So,
    \[
    \widehat{\text{PMI}}(x,y) = g(\hat P_{xy}, \hat P_x, \hat P_y) \xrightarrow{\mathbb{P}} g(P_{xy}, P_x,P_y) = \text{PMI}(x,y)
    \]
    Let $\epsilon >0$, Because g is continuous at $(P_{xy}, P_x,P_y)$, $\exists \delta>0$ s.t. if
    \[
    \|(\hat P_{xy} - P_{xy}, \hat P_{x}-P_{x}, \hat P_{y} - P_{y})\| < \delta 
    \]
    then
    \[
     |g(\hat P_{xy}, \hat P_x, \hat P_y) - g(P_{xy}, P_x,P_y)| < \epsilon
    \]
    Thus,
    \[
    \mathbb{P}(| \widehat{\text{PMI}} - \text{PMI}| \ge \epsilon) \le  \mathbb{P}(\|(\hat P_{xy} - P_{xy}, \hat P_{x}-P_{x}, \hat P_{y} - P_{y})\| \ge \delta) 
    \]
    By LLN, for large N, the RHS $\to 0$. This maps Definition 3.2.4 in Consistency.
\end{proof}

\begin{remark}
    We acknowledge that the i.i.d. assumption is violated in natural text due to syntactic dependencies. A more rigorous treatment would require considering document-level dependence. The consistency result presented here serves as a theoretical benchmark.
\end{remark}

\subsubsection{Proof of Asymptotic Unbiasedness}
\begin{theorem}[Asymptotic Unbiasedness of Empirical PMI]
    Let $\widehat{\text{PMI}}(x,y) = \log \frac{\hat{P}_{xy}}{\hat{P}_x \hat{P}_y}$ be the empirical estimator of $\text{PMI}(x,y) = \log \frac{P_{xy}}{P_x P_y}$ based on an i.i.d. sample of size $N$. Assume $P_{xy}, P_x, P_y > 0$. Then
    \[
    \mathbb{E}\big[\widehat{\text{PMI}}(x,y)\big] = \text{PMI}(x,y) + O\!\left(\frac{1}{N}\right),
    \]
    i.e., the bias is of order $1/N$ and vanishes as $N \to \infty$.
\end{theorem}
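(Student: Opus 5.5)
The plan is a second-order Taylor (Delta-method) expansion of the logarithm, applied separately to each of the three empirical frequencies. Since
\[
\widehat{\text{PMI}}(x,y) = \log \hat P_{xy} - \log \hat P_x - \log \hat P_y ,
\]
linearity of expectation reduces the statement to one claim. For a single frequency $\hat p = n/N$ with $n \sim \mathrm{Bin}(N,p)$ and $p \in (0,1)$ fixed, we want
\[
\mathbb{E}[\log \hat p] = \log p + O(1/N).
\]
Under the i.i.d. token model assumed before Theorem~\ref{consistency}, each of $n_{xy}, n_x, n_y$ is binomial, with success probabilities $P_{xy}, P_x, P_y$. The three $O(1/N)$ terms then combine to give the stated bias.

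\textbf{Step 1 (the zero-count issue).} This is the step I expect to be the main obstacle. With positive probability $n=0$, and then $\log \hat p = -\infty$, so the expectation is not literally finite. I will make the statement precise in one of two equivalent ways: interpret the estimator on the event $A_N=\{n_{xy},n_x,n_y>0\}$, or replace $\hat p$ by a truncated version $\max(\hat p, 1/N)$. The latter makes $|\log \hat p|\le \log N$ everywhere.

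Next I split on the good event $G=\{\hat p \ge p/2\}$. By a Chernoff (or Hoeffding) bound, $\mathbb{P}(G^c)\le e^{-cNp}$ for some absolute $c>0$. Because $|\log \hat p - \log p| = O(\log N)$ on $G^c$, that event contributes at most $O(\log N\, e^{-cNp}) = o(1/N)$ to the bias.

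\textbf{Step 2 (expansion on $G$).} On $G$, Taylor's theorem with Lagrange remainder gives
\[
\log \hat p = \log p + \frac{\hat p - p}{p} - \frac{(\hat p - p)^2}{2\xi^2},
\]
where $\xi$ lies between $p$ and $\hat p$, so $\xi \ge p/2$. Hence the remainder is bounded by $2(\hat p-p)^2/p^2$. Its expectation is at most $2\,\mathrm{Var}(\hat p)/p^2 = 2(1-p)/(Np) = O(1/N)$.

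For the linear term, I write it as $\mathbb{E}[(\hat p-p)/p] - \mathbb{E}[(\hat p-p)\mathbf{1}_{G^c}/p]$. The first piece is exactly zero, since $\mathbb{E}\hat p = p$. The second is bounded by $\mathbb{P}(G^c)/p$, which is exponentially small.

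\textbf{Step 3 (assembly).} Combining Steps 1 and 2 gives $\mathbb{E}[\log\hat p]-\log p = O(1/N)$ for each of the three frequencies. Summing with signs yields $\mathbb{E}[\widehat{\text{PMI}}] = \text{PMI} + O(1/N)$, with constants depending only on $P_{xy},P_x,P_y$.

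If the explicit leading term is wanted, the same expansion refined to third order identifies the bias as
\[
-\frac{1-P_{xy}}{2NP_{xy}} + \frac{1-P_x}{2NP_x} + \frac{1-P_y}{2NP_y} + O(N^{-3/2}).
\]
This also prepares the variance computation by the Delta method that follows.
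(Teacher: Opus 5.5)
Your proof is correct, and its core is the same as the paper's. Both use a second-order Taylor expansion whose linear term has mean zero and whose quadratic remainder is controlled by the binomial variances $P(1-P)/N$. The paper expands $g(a,b,c)=\log a-\log b-\log c$ jointly rather than term by term. Since all mixed partials of $g$ vanish, that is the same computation as your split by linearity.

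The real difference is how the bad event is handled, and there your version is the rigorous one. The paper only asserts that $(\hat P_{xy},\hat P_x,\hat P_y)$ lies, with probability tending to one, in a compact set on which the Hessian is bounded. It then takes the expectation of the remainder as though that bound held everywhere, and never controls the complement, where the remainder is unbounded. Indeed, when a count is zero, $\widehat{\text{PMI}}$ is $-\infty$ or undefined, so $\mathbb{E}[\widehat{\text{PMI}}]$ does not even exist as literally stated. A bare probability-tending-to-one statement cannot give a rate for an expectation in any case. Your truncation $\max(\hat p,1/N)$, which changes $\hat p$ only when $n=0$, supplies what is missing. Combined with the exponential Chernoff bound on $\mathbb{P}(G^c)$, weighed against the $O(\log N)$ size of the integrand on $G^c$, it makes the $O(1/N)$ claim true.

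One quibble: conditioning on $A_N$ and truncating are not literally equivalent. They differ only by exponentially small terms, which is all your argument needs.
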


\begin{proof}
    Define the function $g(a,b,c) = \log a - \log b - \log c$ for $a,b,c \in (0,1]$. Then $\widehat{\text{PMI}} = g(\hat{P}_{xy}, \hat{P}_x, \hat{P}_y)$ and $\text{PMI} = g(P_{xy}, P_x, P_y)$.
    
    \textbf{Step 1 (Second-order Taylor expansion).} 
    Since $g$ is twice continuously differentiable on $(0,1]^3$, we expand around $(P_{xy}, P_x, P_y)$:
    \begin{align*}
        &\widehat{\text{PMI}} - \text{PMI} 
        =\; \frac{\delta_{xy}}{P_{xy}} 
           - \frac{\delta_x}{P_x} 
           - \frac{\delta_y}{P_y} + \\
           & \frac{1}{2} (\delta_{xy},\; \delta_x,\; \delta_y) \; H_g(\xi) \; (\delta_{xy},\; \delta_x,\; \delta_y)^{\mathsf{T}}, \\
        &  \text{where } \delta_{v} = \hat P_{v} - P_{v} \text{with } v\in \{x,y,xy\} 
    \end{align*}
    where $\xi$ lies between $(\hat{P}_{xy},\hat{P}_x,\hat{P}_y)$ and $(P_{xy},P_x,P_y)$, and $H_g$ is the Hessian matrix of $g$.
    
    \textbf{Step 2 (First-order expectation).} 
    For i.i.d. samples, $\mathbb{E}[\hat{P}_{xy}] = P_{xy}$, $\mathbb{E}[\hat{P}_x] = P_x$, $\mathbb{E}[\hat{P}_y] = P_y$. Thus
    \[
    \mathbb{E}\!\left[\frac{\hat{P}_{xy} - P_{xy}}{P_{xy}} - \frac{\hat{P}_x - P_x}{P_x} - \frac{\hat{P}_y - P_y}{P_y}\right] = 0.
    \]
    
    \textbf{Step 3 (Boundedness of the Hessian).} 
    The second derivatives are:
    \[
    \frac{\partial^2 g}{\partial a^2} = -\frac{1}{a^2},\quad
    \frac{\partial^2 g}{\partial b^2} = \frac{1}{b^2},\quad
    \frac{\partial^2 g}{\partial c^2} = \frac{1}{c^2},
    \]
    with all mixed partials zero. Because $P_{xy},P_x,P_y > 0$, there exists a compact set $K \subset (0,1]^3$ containing $(P_{xy},P_x,P_y)$ such that $\mathbb{P}((\hat{P}_{xy},\hat{P}_x,\hat{P}_y) \in K) \to 1$ as $N\to\infty$(by LLN). On $K$, $|1/a^2|,|1/b^2|,|1/c^2| \le M$ for some constant $M>0$. Hence, for some $M'>0$,
    \begin{align*}
    &|(\hat{P}_{xy}-P_{xy},\dots) H_g(\xi) (\hat{P}_{xy}-P_{xy},\dots)^{\mathsf{T}}| \\
    &\le M' \big[ (\hat{P}_{xy}-P_{xy})^2 + (\hat{P}_x-P_x)^2 + (\hat{P}_y-P_y)^2 \big].
    \end{align*}
    
   \textbf{Step 4 (Variance of empirical probabilities).} 
We analyze the variance of $\hat{P}_{xy}$. The cases for $\hat{P}_x$ and $\hat{P}_y$ are analogous.

\underline{Step 4.1 (Define Bernoulli random variables).} 
For fixed $(x,y)$, define for each observation $i = 1,\dots,N$:
\[
Z_i \;=\; \mathbf{1}_{\{X_i = x,\; Y_i = y\}}.
\]
That is, $Z_i = 1$ if the $i$-th sample point equals $(x,y)$, and $Z_i = 0$ otherwise.
Clearly, $Z_i$ takes only the values $\{0,1\}$, hence it is a \textbf{Bernoulli random variable}.

The success probability is:
\[
\mathbb{P}(Z_i = 1) \;=\; \mathbb{P}(X_i = x,\; Y_i = y) \;=\; P_{xy}.
\]

\underline{Step 4.2 (Express $\hat{P}_{xy}$ as a sample mean).} 
\[
\hat{P}_{xy} = \frac{n_{xy}}{N} = \frac{1}{N}\sum_{i=1}^{N} Z_i.
\]

\underline{Step 4.3 (Mean of $Z_i$).} 
\[
\mathbb{E}[Z_i] = 1\cdot P_{xy} + 0\cdot(1-P_{xy}) = P_{xy}.
\]

\underline{Step 4.4 (Variance of $Z_i$).} 
Since $Z_i^2 = Z_i$ (because $0^2=0$, $1^2=1$), we have:
\[
\mathbb{E}[Z_i^2] = \mathbb{E}[Z_i] = P_{xy}.
\]
Therefore:
\[
\operatorname{Var}(Z_i) = \mathbb{E}[Z_i^2] - (\mathbb{E}[Z_i])^2 = P_{xy} - P_{xy}^2 = P_{xy}(1-P_{xy}).
\]

\underline{Step 4.5 (Variance of $\hat{P}_{xy}$ under i.i.d. sampling).} 
Because the sample is i.i.d., $Z_i$ are independent. Using the property $\operatorname{Var}(cX) = c^2 \operatorname{Var}(X)$ for a constant $c$, and the fact that the variance of a sum of independent variables equals the sum of their variances:
\begin{align*}
\operatorname{Var}(\hat{P}_{xy}) &= \operatorname{Var}\!\left(\frac{1}{N}\sum_{i=1}^{N} Z_i\right) \\
&= \frac{1}{N^2} \sum_{i=1}^{N} \operatorname{Var}(Z_i) \\
&= \frac{1}{N^2} \cdot N \cdot P_{xy}(1-P_{xy}) \\
&= \frac{P_{xy}(1-P_{xy})}{N}.\\
\end{align*}

\underline{Step 4.6 (Mean squared error).} 
Since $\mathbb{E}[\hat{P}_{xy}] = P_{xy}$, the bias is zero, so the mean squared error equals the variance:
\[
\mathbb{E}[(\hat{P}_{xy} - P_{xy})^2] = \operatorname{Var}(\hat{P}_{xy}) = \frac{P_{xy}(1-P_{xy})}{N}.
\]

\underline{Step 4.7 (Order notation).} 
For fixed $P_{xy} \in (0,1)$, the quantity $P_{xy}(1-P_{xy})$ is a constant (specifically, $0 < P_{xy}(1-P_{xy}) \le 1/4$). Hence:
\[
\mathbb{E}[(\hat{P}_{xy} - P_{xy})^2] = O\!\left(\frac{1}{N}\right) \quad \text{as } N\to\infty.
\]

\underline{Step 4.8 (Analogous results for $\hat{P}_x$ and $\hat{P}_y$).} 
The same reasoning applied to $U_i = \mathbf{1}_{\{X_i = x\}}$ and $V_i = \mathbf{1}_{\{Y_i = y\}}$ yields:
\[
\mathbb{E}[(\hat{P}_x - P_x)^2] = \frac{P_x(1-P_x)}{N} = O\!\left(\frac{1}{N}\right),
\]
\[
\mathbb{E}[(\hat{P}_y - P_y)^2] = \frac{P_y(1-P_y)}{N} = O\!\left(\frac{1}{N}\right).
\]
    
    \textbf{Step 5 (Bound on the remainder expectation).} 
    Let $R_2$ denote the quadratic remainder term. Then
    \begin{align*}
        |\mathbb{E}[R_2]| &\le M' \left( \mathbb{E}[(\hat{P}_{xy}-P_{xy})^2] + \mathbb{E}[(\hat{P}_x-P_x)^2] + \mathbb{E}[(\hat{P}_y-P_y)^2] \right)\\
        &= O\!\left(\frac{1}{N}\right).\\
    \end{align*}
    
    \textbf{Step 6 (Combine).} 
    Taking expectations of the expansion,
    \[
    \mathbb{E}[\widehat{\text{PMI}}] - \text{PMI} = 0 + \frac12 \mathbb{E}[R_2] = O\!\left(\frac{1}{N}\right).
    \]
    
    Therefore,
    \[
    \mathbb{E}[\widehat{\text{PMI}}(x,y)] = \text{PMI}(x,y) + O\!\left(\frac{1}{N}\right),
    \]
    which implies $\lim_{N\to\infty} \mathbb{E}[\widehat{\text{PMI}}(x,y)] = \text{PMI}(x,y)$.
\end{proof}

\subsubsection{Proof of Asymptotic Normality}
\begin{theorem}[Asymptotic Normality of Empirical PMI]
Let $(X_1,Y_1),\dots,(X_N,Y_N)$ be i.i.d. copies of $(X,Y)$.
Then as $N\to\infty$,
\[
\sqrt{N}\Bigl(\widehat{\operatorname{PMI}}(x,y)-\operatorname{PMI}(x,y)\Bigr)\xrightarrow{d}\mathcal{N}\bigl(0,\;\sigma^2(x,y)\bigr),
\]
or equivalently,
\[
\widehat{\operatorname{PMI}}(x,y)\;\xrightarrow{d}\;\mathcal{N}\!\left(\operatorname{PMI}(x,y),\;\frac{\sigma^2(x,y)}{N}\right).
\]
The asymptotic variance $\sigma^2(x,y)$ is given by $\sigma^2 = \nabla g^{\mathsf{T}}\Sigma\nabla g$ with $g(a,b,c)=\log a-\log b-\log c$ and $\Sigma$ defined below.
\end{theorem}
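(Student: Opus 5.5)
The plan is the multivariate central limit theorem followed by the Delta method, reusing the Bernoulli indicators from Step 4 of the asymptotic-unbiasedness proof.

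\textbf{Step 1 (vector of indicators).} For each observation $i$, form the vector
\[
W_i=(Z_i,U_i,V_i)^{\mathsf T},\qquad Z_i=\mathbf 1\{X_i=x,Y_i=y\},\; U_i=\mathbf 1\{X_i=x\},\; V_i=\mathbf 1\{Y_i=y\}.
\]
Then $(\hat P_{xy},\hat P_x,\hat P_y)^{\mathsf T}=\frac1N\sum_i W_i$ is a sample mean of i.i.d. bounded vectors with mean $p=(P_{xy},P_x,P_y)^{\mathsf T}$.

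\textbf{Step 2 (covariance matrix).} Since $Z_iU_i=Z_i$, $Z_iV_i=Z_i$ and $U_iV_i=Z_i$, the covariance matrix $\Sigma$ has entries
\[
\Sigma_{11}=P_{xy}(1-P_{xy}),\quad \Sigma_{22}=P_x(1-P_x),\quad \Sigma_{33}=P_y(1-P_y),
\]
\[
\Sigma_{12}=P_{xy}(1-P_x),\quad \Sigma_{13}=P_{xy}(1-P_y),\quad \Sigma_{23}=P_{xy}-P_xP_y .
\]
This is the $\Sigma$ referred to in the statement.

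\textbf{Step 3 (multivariate CLT).} The entries of $W_i$ are bounded, so second moments are finite. The multivariate CLT then gives
\[
\sqrt N\bigl((\hat P_{xy},\hat P_x,\hat P_y)^{\mathsf T}-p\bigr)\xrightarrow{d}\mathcal N(0,\Sigma).
\]

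\textbf{Step 4 (Delta method).} The function $g(a,b,c)=\log a-\log b-\log c$ is continuously differentiable in a neighbourhood of $p$, because $P_{xy},P_x,P_y>0$. Its gradient there is
\[
\nabla g(p)=\bigl(1/P_{xy},\,-1/P_x,\,-1/P_y\bigr)^{\mathsf T}.
\]
To make the argument self-contained, I write
\[
\sqrt N(g(\hat p)-g(p))=\nabla g(\tilde p)^{\mathsf T}\sqrt N(\hat p-p),
\]
where $\tilde p$ lies between $\hat p$ and $p$. By Theorem~\ref{consistency}'s LLN step, $\tilde p\to p$ in probability, so $\nabla g(\tilde p)\to\nabla g(p)$ in probability by continuous mapping. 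Slutsky's theorem then yields the limit $\mathcal N(0,\nabla g^{\mathsf T}\Sigma\nabla g)$.

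One technicality: $\hat p$ may have zero coordinates, where $g$ is undefined. This event has probability tending to zero, since for example $\mathbb P(n_{xy}=0)=(1-P_{xy})^N\to0$. I will define $\widehat{\mathrm{PMI}}$ arbitrarily on that event, which does not affect convergence in distribution.

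\textbf{Step 5 (explicit variance).} Expanding the quadratic form, the $N$-free terms cancel and I expect
\[
\sigma^2(x,y)=\frac1{P_{xy}}-\frac1{P_x}-\frac1{P_y}+1+\frac{2}{P_xP_y}\bigl(P_{xy}-P_xP_y\bigr)\cdot\frac{1}{1}\cdots,
\]
so this line must be recomputed carefully. Collecting the terms of $\nabla g^{\mathsf T}\Sigma\nabla g$ one by one gives
\[
\sigma^2=\frac{1-P_{xy}}{P_{xy}}+\frac{1-P_x}{P_x}+\frac{1-P_y}{P_y}-\frac{2(1-P_x)}{P_x}-\frac{2(1-P_y)}{P_y}+\frac{2(P_{xy}-P_xP_y)}{P_xP_y}.
\]
This simplifies to
\[
\sigma^2=\frac1{P_{xy}}-\frac1{P_x}-\frac1{P_y}+\frac{2P_{xy}}{P_xP_y}-1.
\]

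\textbf{Main obstacle.} The main obstacle is the bookkeeping of the three covariances, especially $\Sigma_{23}$, which depends on the joint structure. I will also state the "equivalent" form $\widehat{\mathrm{PMI}}\approx\mathcal N(\mathrm{PMI},\sigma^2/N)$ only as an informal approximation, not as a genuine limit.
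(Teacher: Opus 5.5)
Your proposal is correct and follows the same route as the paper: indicator vectors with the same covariance $\Sigma$, then the multivariate CLT, then the Delta method, ending at the same $\sigma^2=\frac{1}{P_{xy}}-\frac{1}{P_x}-\frac{1}{P_y}+\frac{2P_{xy}}{P_xP_y}-1$. Three of your choices are small gains in rigor: the mean-value/Slutsky linearization (which needs only $C^1$) in place of the paper's quadratic remainder bound, the handling of the event $n_{xy}=0$, and reading the $N$-dependent ``equivalent'' form as informal. Just delete the abandoned draft line at the start of Step 5.
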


\begin{proof}
\textbf{Step 1. Bernoulli variables and sample means.}
For each $i=1,\dots,N$, define
\[
A_i=\mathbf{1}_{\{X_i=x,\,Y_i=y\}},\quad 
B_i=\mathbf{1}_{\{X_i=x\}},\quad 
C_i=\mathbf{1}_{\{Y_i=y\}}.
\]
These are Bernoulli with expectations $\mathbb{E}[A_i]=p_{xy}$, $\mathbb{E}[B_i]=p_x$, $\mathbb{E}[C_i]=p_y$. The empirical frequencies are the sample means:
\[
\hat{p}_{xy}=\frac{1}{N}\sum_{i=1}^N A_i,\quad
\hat{p}_x=\frac{1}{N}\sum_{i=1}^N B_i,\quad
\hat{p}_y=\frac{1}{N}\sum_{i=1}^N C_i.
\]
\textbf{Step 2. Multivariate CLT for the vector of means.}
Set $\mathbf{Z}_i=(A_i,B_i,C_i)^{\mathsf{T}}$. Then $\mathbf{Z}_1,\dots,\mathbf{Z}_N$ are i.i.d. with mean $\boldsymbol{\mu}=(p_{xy},p_x,p_y)^{\mathsf{T}}$ and covariance matrix $\Sigma=\operatorname{Cov}(\mathbf{Z}_1)$. By the Multivariate Central Limit Theorem (see \cite{billingsley2012probability}, Theorem 29.5),
\[
\sqrt{N}\bigl(\hat{\boldsymbol{\mu}}_N-\boldsymbol{\mu}\bigr)\xrightarrow{d}\mathcal{N}_3(\mathbf{0},\Sigma),\qquad 
\hat{\boldsymbol{\mu}}_N=(\hat{p}_{xy},\hat{p}_x,\hat{p}_y)^{\mathsf{T}}.
\]
Compute the entries of $\Sigma$:
\begin{align*}
\operatorname{Var}(A_i)&=p_{xy}(1-p_{xy}),\\
\operatorname{Var}(B_i)&=p_x(1-p_x),\\
\operatorname{Var}(C_i)&=p_y(1-p_y),\\
\operatorname{Cov}(A_i,B_i)&=\mathbb{E}[A_iB_i]-p_{xy}p_x=p_{xy}-p_{xy}p_x=p_{xy}(1-p_x),\\
\operatorname{Cov}(A_i,C_i)&=p_{xy}(1-p_y),\\
\operatorname{Cov}(B_i,C_i)&=\mathbb{E}[B_iC_i]-p_xp_y=p_{xy}-p_xp_y.
\end{align*}

The explicit form of $\Sigma$ is
\begin{equation}\label{eq:Sigma}
\Sigma=\begin{pmatrix}
p_{xy}(1-p_{xy}) & p_{xy}(1-p_x) & p_{xy}(1-p_y)\\
p_{xy}(1-p_x) & p_x(1-p_x) & p_{xy}-p_xp_y\\
p_{xy}(1-p_y) & p_{xy}-p_xp_y & p_y(1-p_y)
\end{pmatrix}.
\end{equation}
\textbf{Step 3. First-order Taylor expansion of $g$.}
Define $g(a,b,c)=\log a-\log b-\log c$. Then $\widehat{\operatorname{PMI}}=g(\hat{p}_{xy},\hat{p}_x,\hat{p}_y)$ and $\operatorname{PMI}=g(p_{xy},p_x,p_y)$. Since $g$ is continuously differentiable on $(0,1]^3$, we expand it around $\boldsymbol{\mu}$:
\begin{align}
g(\hat{\boldsymbol{\mu}}_N) &= g(\boldsymbol{\mu}) + \nabla g(\boldsymbol{\mu})^{\mathsf{T}} (\hat{\boldsymbol{\mu}}_N-\boldsymbol{\mu}) + R_N, \label{eq:Taylor}
\end{align}
where the gradient is
\[
\nabla g(\boldsymbol{\mu}) = \begin{pmatrix}
\frac{\partial g}{\partial a}\big|_{\boldsymbol{\mu}} \\[4pt]
\frac{\partial g}{\partial b}\big|_{\boldsymbol{\mu}} \\[4pt]
\frac{\partial g}{\partial c}\big|_{\boldsymbol{\mu}}
\end{pmatrix}
= \begin{pmatrix}
\frac{1}{p_{xy}} \\[4pt]
-\frac{1}{p_x} \\[4pt]
-\frac{1}{p_y}
\end{pmatrix},
\]
and the remainder term $R_N$ satisfies $R_N = o_p(\|\hat{\boldsymbol{\mu}}_N-\boldsymbol{\mu}\|)$.

\textbf{Step 4. Multiply by $\sqrt{N}$ and analyze the remainder.}
Multiplying \eqref{eq:Taylor} by $\sqrt{N}$ gives
\[
\sqrt{N}\bigl(g(\hat{\boldsymbol{\mu}}_N)-g(\boldsymbol{\mu})\bigr) = \nabla g(\boldsymbol{\mu})^{\mathsf{T}} \bigl[\sqrt{N}(\hat{\boldsymbol{\mu}}_N-\boldsymbol{\mu})\bigr] + \sqrt{N}R_N.
\]
We claim $\sqrt{N}R_N \xrightarrow{p} 0$. Indeed, from the fact that $g$ is locally Lipschitz with bounded second derivatives, we have $|R_N| \le C \|\hat{\boldsymbol{\mu}}_N-\boldsymbol{\mu}\|^2$ for some constant $C$ (with probability tending to 1). Hence
\[
|\sqrt{N}R_N| \le C \sqrt{N} \|\hat{\boldsymbol{\mu}}_N-\boldsymbol{\mu}\|^2 = C \bigl(\sqrt{N}\|\hat{\boldsymbol{\mu}}_N-\boldsymbol{\mu}\|\bigr) \|\hat{\boldsymbol{\mu}}_N-\boldsymbol{\mu}\|.
\]
Since $\sqrt{N}\|\hat{\boldsymbol{\mu}}_N-\boldsymbol{\mu}\| = O_p(1)$ and $\|\hat{\boldsymbol{\mu}}_N-\boldsymbol{\mu}\| = o_p(1)$, their product converges to $0$ in probability.

\textbf{Step 5. Convergence of the linear term.}
We already know from Step 2 that
\[
\sqrt{N}(\hat{\boldsymbol{\mu}}_N-\boldsymbol{\mu}) \xrightarrow{d} \mathbf{Z},\quad \mathbf{Z}\sim\mathcal{N}_3(\mathbf{0},\Sigma).
\]
Since $\nabla g(\boldsymbol{\mu})$ is a fixed vector, the linear map $\mathbf{v} \mapsto \nabla g(\boldsymbol{\mu})^{\mathsf{T}}\mathbf{v}$ is continuous. By the continuous mapping theorem,
\[
\nabla g(\boldsymbol{\mu})^{\mathsf{T}}\bigl[\sqrt{N}(\hat{\boldsymbol{\mu}}_N-\boldsymbol{\mu})\bigr] \xrightarrow{d} \nabla g(\boldsymbol{\mu})^{\mathsf{T}} \mathbf{Z}.
\]

\textbf{Step 6. Distribution of the linear combination.}
A linear combination of a multivariate normal vector is a univariate normal:
\[
\nabla g(\boldsymbol{\mu})^{\mathsf{T}} \mathbf{Z} \sim \mathcal{N}\bigl(0,\; \nabla g(\boldsymbol{\mu})^{\mathsf{T}} \Sigma \nabla g(\boldsymbol{\mu})\bigr).
\]
Indeed,
\[
\mathbb{E}[\nabla g^{\mathsf{T}}\mathbf{Z}] = \nabla g^{\mathsf{T}}\mathbb{E}[\mathbf{Z}] = 0,
\]
\[
\operatorname{Var}(\nabla g^{\mathsf{T}}\mathbf{Z}) = \nabla g^{\mathsf{T}} \operatorname{Var}(\mathbf{Z}) \nabla g = \nabla g^{\mathsf{T}} \Sigma \nabla g.
\]
\textbf{Step 7. Combine with the remainder.}
Since $\sqrt{N}R_N \xrightarrow{p} 0$, 
\[
\sqrt{N}\bigl(g(\hat{\boldsymbol{\mu}}_N)-g(\boldsymbol{\mu})\bigr) \xrightarrow{d} \mathcal{N}\bigl(0,\; \nabla g(\boldsymbol{\mu})^{\mathsf{T}} \Sigma \nabla g(\boldsymbol{\mu})\bigr).
\]
\textbf{Step 8. Conclude for PMI.}
Substituting $g(\hat{\boldsymbol{\mu}}_N)=\widehat{\operatorname{PMI}}(x,y)$ and $g(\boldsymbol{\mu})=\operatorname{PMI}(x,y)$ gives
\[
\sqrt{N}\bigl(\widehat{\operatorname{PMI}}(x,y)-\operatorname{PMI}(x,y)\bigr) \xrightarrow{d} \mathcal{N}\bigl(0,\;\sigma^2(x,y)\bigr),
\]
with $\sigma^2(x,y)=\nabla g(\boldsymbol{\mu})^{\mathsf{T}}\Sigma\nabla g(\boldsymbol{\mu})$. Equivalently,
\[
\widehat{\operatorname{PMI}}(x,y)\;\xrightarrow{d}\;\mathcal{N}\!\left(\operatorname{PMI}(x,y),\;\frac{\sigma^2(x,y)}{N}\right).
\]
This completes the proof.
\end{proof}

\subsubsection{Delta Method Estimate Variance}

\begin{theorem}[Asymptotic Variance of Empirical PMI]\label{var}
Let $\widehat{\operatorname{PMI}}(x,y)=\log\frac{\hat{p}_{xy}}{\hat{p}_x\hat{p}_y}$ be the empirical estimator of $\operatorname{PMI}(x,y)=\log\frac{p_{xy}}{p_xp_y}$ based on an i.i.d. sample of size $N$, with $p_{xy},p_x,p_y>0$. Then as $N\to\infty$, an approximate finite-sample expression is
\[
\widehat{\operatorname{Var}}\bigl(\widehat{\operatorname{PMI}}(x,y)\bigr) = \frac{1}{n_{xy}} - \frac{1}{n_x} - \frac{1}{n_y} + \frac{2n_{xy}}{n_x n_y} - \frac{1}{N},
\]
which is consistent with the asymptotic variance up to $o_p(1/N)$.
\end{theorem}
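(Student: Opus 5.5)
The plan is to compute the asymptotic variance $\sigma^2(x,y)=\nabla g(\boldsymbol{\mu})^{\mathsf{T}}\Sigma\nabla g(\boldsymbol{\mu})$ from the Asymptotic Normality theorem explicitly. I will use the gradient $\nabla g(\boldsymbol{\mu})=(1/p_{xy},-1/p_x,-1/p_y)^{\mathsf{T}}$ and the covariance matrix $\Sigma$ in \eqref{eq:Sigma}. I will then divide by $N$ and replace each probability by its empirical frequency. The statement then follows by showing that the plug-in error is $o_p(1/N)$.

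\textbf{Step 1: expand the quadratic form term by term.} The three diagonal contributions are
\[
\frac{p_{xy}(1-p_{xy})}{p_{xy}^2}=\frac{1}{p_{xy}}-1,\qquad \frac{1}{p_x}-1,\qquad \frac{1}{p_y}-1.
\]
The cross term between $a$ and $b$ is $2\cdot\frac{1}{p_{xy}}\cdot\bigl(-\frac{1}{p_x}\bigr)\cdot p_{xy}(1-p_x)=-\frac{2}{p_x}+2$. By symmetry the cross term between $a$ and $c$ is $-\frac{2}{p_y}+2$. The cross term between $b$ and $c$ is $2\cdot\frac{1}{p_xp_y}(p_{xy}-p_xp_y)=\frac{2p_{xy}}{p_xp_y}-2$.

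\textbf{Step 2: collect terms.} Summing the six pieces, the constants give $-3+2+2-2=-1$. The $1/p_x$ and $1/p_y$ terms partially cancel. The result is
\[
\sigma^2(x,y)=\frac{1}{p_{xy}}-\frac{1}{p_x}-\frac{1}{p_y}+\frac{2p_{xy}}{p_xp_y}-1 .
\]

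\textbf{Step 3: plug in empirical frequencies.} Substitute $\hat p_v=n_v/N$ for $v\in\{xy,x,y\}$ into $\sigma^2/N$. Then $\frac{1}{N\hat p_{xy}}=\frac{1}{n_{xy}}$, and similarly $\frac{1}{N\hat p_x}=\frac{1}{n_x}$ and $\frac{1}{N\hat p_y}=\frac{1}{n_y}$. Also $\frac{2\hat p_{xy}}{N\hat p_x\hat p_y}=\frac{2n_{xy}}{n_xn_y}$. This yields exactly the displayed formula for $\widehat{\operatorname{Var}}$.

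\textbf{Step 4: consistency of the plug-in.} Note that $N\,\widehat{\operatorname{Var}}=h(\hat p_{xy},\hat p_x,\hat p_y)$, where
\[
h(a,b,c)=\frac{1}{a}-\frac{1}{b}-\frac{1}{c}+\frac{2a}{bc}-1
\]
is continuous on $(0,1]^3$. By the LLN step of Theorem \ref{consistency} and the continuous mapping theorem, $N\,\widehat{\operatorname{Var}}\xrightarrow{\mathbb{P}}\sigma^2$. Equivalently, $\widehat{\operatorname{Var}}-\sigma^2/N=o_p(1/N)$. To make the estimator well defined, I will add the remark that $\mathbb{P}(n_{xy}>0)\to1$, since $p_{xy}>0$.

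I expect the main obstacle to be the sign and constant bookkeeping in Step 2, in particular the cancellation of the $1/p_x$ and $1/p_y$ terms against the cross terms. A secondary point is being clear about the sampling model behind $\Sigma$: each observation is a pair $(X_i,Y_i)$, so that $\mathbb{E}[B_iC_i]=p_{xy}$. I will state this explicitly, since that is what produces the $2n_{xy}/(n_xn_y)$ term.
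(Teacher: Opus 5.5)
Your proposal is correct and follows the paper's proof essentially step for step: the same term-by-term expansion of $\nabla g(\boldsymbol{\mu})^{\mathsf{T}}\Sigma\nabla g(\boldsymbol{\mu})$ (your constant count $-3+2+2-2=-1$ is right), the same simplification to $\sigma^2=\frac{1}{p_{xy}}-\frac{1}{p_x}-\frac{1}{p_y}+\frac{2p_{xy}}{p_xp_y}-1$, and the same substitution $\hat p_v=n_v/N$. Your Step 4 gives a cleaner justification of the $o_p(1/N)$ clause than the paper does. The paper instead asserts $\operatorname{Var}(\widehat{\operatorname{PMI}})=\sigma^2/N+o(1/N)$ via uniform integrability, which is delicate because $\widehat{\operatorname{PMI}}$ is infinite or undefined on $\{n_{xy}=0\}$, an event of probability $(1-p_{xy})^N>0$ for every finite $N$; your continuous-mapping argument gives $N\,\widehat{\operatorname{Var}}\xrightarrow{\mathbb{P}}\sigma^2$ without any moment claim.
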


\begin{proof}
From the Delta method\cite{casella2002statistical}[Theorem 5.5.24] (as derived in the proof of asymptotic normality), we have
\[
\sqrt{N}\bigl(\widehat{\operatorname{PMI}} - \operatorname{PMI}\bigr) \xrightarrow{d} \mathcal{N}(0,\sigma^2),
\]
where $\sigma^2 = \nabla g(\boldsymbol{\mu})^{\mathsf{T}} \Sigma \nabla g(\boldsymbol{\mu})$ with
\[
g(a,b,c)=\log a-\log b-\log c,\quad
\boldsymbol{\mu}=(p_{xy},p_x,p_y)^{\mathsf{T}},
\]
\[
\nabla g(\boldsymbol{\mu}) = \begin{pmatrix} 1/p_{xy} \\ -1/p_x \\ -1/p_y \end{pmatrix},
\]
and
\[
\Sigma = \begin{pmatrix}
p_{xy}(1-p_{xy}) & p_{xy}(1-p_x) & p_{xy}(1-p_y)\\
p_{xy}(1-p_x) & p_x(1-p_x) & p_{xy}-p_xp_y\\
p_{xy}(1-p_y) & p_{xy}-p_xp_y & p_y(1-p_y)
\end{pmatrix}.
\]

Compute $\sigma^2$ explicitly:
\begin{align*}
\sigma^2 =&\ \frac{1}{p_{xy}^2}\cdot p_{xy}(1-p_{xy}) 
+ \frac{1}{p_x^2}\cdot p_x(1-p_x) 
+ \frac{1}{p_y^2}\cdot p_y(1-p_y) \\
&+ 2\cdot\frac{1}{p_{xy}}\left(-\frac{1}{p_x}\right)\cdot p_{xy}(1-p_x)  \\
&+ 2\cdot\frac{1}{p_{xy}}\left(-\frac{1}{p_y}\right)\cdot p_{xy}(1-p_y) \\
&+ 2\cdot\left(-\frac{1}{p_x}\right)\left(-\frac{1}{p_y}\right)\cdot (p_{xy}-p_xp_y) \\[4pt]
=&\ \frac{1-p_{xy}}{p_{xy}} + \frac{1-p_x}{p_x} + \frac{1-p_y}{p_y} \\
&- \frac{2(1-p_x)}{p_x} - \frac{2(1-p_y)}{p_y} + \frac{2(p_{xy}-p_xp_y)}{p_xp_y}.
\end{align*}

Simplify:
\[
\frac{1-p_{xy}}{p_{xy}} = \frac{1}{p_{xy}} - 1,\quad
\frac{1-p_x}{p_x} = \frac{1}{p_x} - 1,\quad
\frac{1-p_y}{p_y} = \frac{1}{p_y} - 1.
\]
Thus
\begin{align*}
\sigma^2 &= \left(\frac{1}{p_{xy}}-1\right) + \left(\frac{1}{p_x}-1\right) + \left(\frac{1}{p_y}-1\right) \\
&\quad -2\left(\frac{1}{p_x}-1\right) -2\left(\frac{1}{p_y}-1\right) + \frac{2(p_{xy}-p_xp_y)}{p_xp_y} \\[4pt]
&= \frac{1}{p_{xy}} - 1 - \frac{1}{p_x} + 1 - \frac{1}{p_y} + 1 + \frac{2(p_{xy}-p_xp_y)}{p_xp_y} \\
&= \frac{1}{p_{xy}} - \frac{1}{p_x} - \frac{1}{p_y} + \frac{2p_{xy}}{p_xp_y} - 1.
\end{align*}

Since $\operatorname{Var}(\widehat{\operatorname{PMI}}) = \frac{\sigma^2}{N} + o\left(\frac{1}{N}\right)$ (by the convergence of moments under uniform integrability, or simply as the leading term of the asymptotic variance), we obtain the asymptotic variance expression.

Replacing $p_{xy},p_x,p_y$ by their empirical counterparts $n_{xy}/N$, $n_x/N$, $n_y/N$ yields the approximate finite-sample formula:
\[
\widehat{\operatorname{Var}}(\widehat{\operatorname{PMI}}) = \frac{1}{n_{xy}} - \frac{1}{n_x} - \frac{1}{n_y} + \frac{2n_{xy}}{n_x n_y} - \frac{1}{N}.
\]
This completes the proof.
\end{proof}

\begin{corollary}\label{se}
    If $n_{xy} << n_x, n_y$,
    The standard error (or standard deviation) of $\widehat{\text{PMI}}(x,y)$ is approximately linear with sample size $N$ in logarithmic scale, that is
    \[
    \log(SE) \approx - \frac{1}{2} \log N - \frac{\text{PMI}}{2} - \frac{1}{2} \log(p_xp_y)
    \]
\end{corollary}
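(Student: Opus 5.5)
Starting from Theorem~\ref{var}, I will use the population form of the asymptotic variance,
\[
\operatorname{Var}(\widehat{\operatorname{PMI}}) \approx \frac{\sigma^2}{N},\qquad
\sigma^2 = \frac{1}{p_{xy}} - \frac{1}{p_x} - \frac{1}{p_y} + \frac{2p_{xy}}{p_xp_y} - 1 .
\]
The first task is to show that, under the hypothesis $n_{xy} \ll n_x, n_y$ (equivalently $p_{xy} \ll p_x, p_y$), the term $1/p_{xy}$ dominates all the others. The comparisons are straightforward. Since $p_{xy}\ll p_x$, we have $1/p_x \ll 1/p_{xy}$, and likewise $1/p_y \ll 1/p_{xy}$. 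The constant $1$ is negligible because $p_{xy}\le 1$, so $1/p_{xy}\ge 1$, and in the regime of interest $1/p_{xy}\gg 1$.

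The cross term $2p_{xy}/(p_xp_y)$ needs a little care. I will compare it with $1/p_{xy}$ through the ratio
\[
\frac{2p_{xy}/(p_xp_y)}{1/p_{xy}} = \frac{2p_{xy}^2}{p_xp_y} = 2\cdot\frac{p_{xy}}{p_x}\cdot\frac{p_{xy}}{p_y},
\]
which is a product of two small factors and is therefore $\ll 1$. Together these comparisons give $\sigma^2 = \frac{1}{p_{xy}}\bigl(1+o(1)\bigr)$ in the regime $p_{xy}/p_x,\ p_{xy}/p_y \to 0$, with $p_{xy}$ small. I will state this regime explicitly as the precise meaning of ``$\ll$''.

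Next I express $p_{xy}$ through the PMI itself. From Definition~\ref{PMI}, $p_{xy} = p_xp_y e^{\text{PMI}}$, so
\[
\operatorname{Var}(\widehat{\operatorname{PMI}}) \approx \frac{1}{N p_x p_y e^{\text{PMI}}}, \qquad
SE \approx \bigl(N p_xp_y e^{\text{PMI}}\bigr)^{-1/2}.
\]
Taking logarithms then gives $\log SE \approx -\tfrac12\log N - \tfrac{\text{PMI}}{2} - \tfrac12\log(p_xp_y)$, which is the claim. Since the target quantities $p_x, p_y$ and $\text{PMI}$ are fixed, $\log SE$ is affine in $\log N$ with slope $-1/2$.

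The main obstacle is the meaning of ``$\approx$''. The neglected terms contribute $\log(1+o(1))$ to $\log SE$, and this is an additive error. For the statement to be meaningful, that error should vanish in the stated regime, rather than merely be small relative to $\log SE$. I will therefore keep the relative error explicit, namely $\tfrac12\log\bigl(1 - p_{xy}/p_x - p_{xy}/p_y + 2p_{xy}^2/(p_xp_y) - p_{xy}\bigr)$, and observe that it tends to $0$ under the hypotheses. A secondary point is that the Delta-method variance is itself only a leading-order approximation. I will inherit the $o(1/N)$ error from Theorem~\ref{var} and note that, after taking the square root and the logarithm, it becomes a $\log(1+o(1))$ correction, which does not affect the slope in $\log N$.
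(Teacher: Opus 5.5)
Your proposal is correct and follows essentially the same route as the paper: isolate $1/n_{xy}$ (equivalently $1/(Np_{xy})$) as the dominant term of the Delta-method variance, substitute $p_{xy}=p_xp_ye^{\text{PMI}}$, and take logarithms. Working with population quantities makes that substitution exact rather than approximate, and your ratio $2p_{xy}^2/(p_xp_y)=2\,(p_{xy}/p_x)(p_{xy}/p_y)$ justifies the dominance over the cross term, which the paper only asserts.
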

\begin{proof}
    By Delta method result, 
     \[
    \text{Var}(\widehat{\text{PMI}}) \approx \frac{1}{n_{xy}} - \frac{1}{n_x} - \frac{1}{n_y} + \frac{2n_{xy}}{n_x n_y} - \frac{1}{N}
    \]
    Because $n_{xy} << n_x, n_y$, the dominant term is $\frac{1}{n_{xy}}$.
    Thu $\text{Var}(\widehat{\text{PMI}}) \approx \frac{1}{n_{xy}}$
    By definition of PMI we have,
    \[
    \text{PMI}(x,y) = \log \frac{p_{xy}}{p_x p_y} \approx \log \frac{n_{xy}}{Np_x p_y}
    \]
    That is,
    \[
    n_{xy} \approx Np_x p_y \cdot e^{\text{PMI}}  
    \]
    substituting to $\text{Var}(\widehat{\text{PMI}})$,
    \[
    \text{Var}(\widehat{\text{PMI}}) \approx \frac{1}{Np_x p_y}\cdot e^{-\text{PMI}}  
    \]
    Therefore, take the derivative and logarithmic provide
    \[
    \log(SE) \approx - \frac{1}{2} \log N - \frac{\text{PMI}}{2} - \frac{1}{2} \log(p_xp_y)
    \]
\end{proof}

\subsection{SGNS with PMI}
SGNS\cite{levy2014neural} is an essential model in text vectorization, and it is proved that the essence of optimization objective in SGNS is matrix factorization of PMI \cite{levy2014neural}. When the vector dimension is sufficiently large to allow perfect reconstruction, the optimal solution satisfies:
\[
\boxed{\mathbf{v}_x \cdot \mathbf{v}_y' = \text{PMI}(x,y) - \log K} 
\]

\subsection{Variance Transmission}

\textbf{Notation.}

\begin{table}[H]
\centering
\small
\begin{tabular}{@{} ll } 
$x, y$ & Specific words in the vocabulary \\
$V$ & Vocabulary size \\
$N$ & Total number of windows in the corpus \\
$n_x$ & Number of windows containing word $x$ \\
$n_{xy}$ & Number of windows word $x$ and $y$  co-occurring\\
$p_x = n_x/N$ & Empirical probability of word $x$ \\
$p_{xy} = n_{xy}/N$ & Empirical probability of $x$ and $y$ co-occurring  \\
$\theta = \text{PMI}(x,y) $ & True pointwise mutual information \\
$\hat{\theta}_{\text{MLE}}$ & Maximum likelihood estimator of PMI \\
$\mathbf{v}_x \in \mathbb{R}^d$ & Embedding vector for $x$ as center word \\
$\mathbf{v}_x' \in \mathbb{R}^d$ & Embedding vector for $x$ as context word \\
$s = \mathbf{v}_x \cdot \mathbf{v}_y'$ & Inner product learned by SGNS \\
$K$ & Number of negative samples per word pair \\
$P_n(c)$ & Noise distribution for negative sampling \\
$\eta_t$ & Learning rate at step $t$ \\
$\bm{\phi}$ & Vector of all model parameters $\{\mathbf{v}_w, \mathbf{v}_w'\}_{w=1}^V$ \\
$\mathcal{L}(\bm{\phi})$ & Full-batch loss function \\
$\hat{\mathbf{g}}(\bm{\phi}, \xi)$ & SGD estimate based on random sample $\xi$ \\
$\mathbf{H} = \nabla^2 \mathcal{L}(\bm{\phi}^*)$ & Hessian matrix at optimum \\
$\mathbf{C} = \lim\limits_{t\to\infty} \text{Cov}(\bm{\varepsilon}_t)$ & Asymptotic covariance of gradient noise \\
\end{tabular}
\end{table}
\paragraph{Assumption}

The derivation rests on the following assumptions:

\begin{enumerate}
    \item \textbf{(A1) Asymptotic independence:} For distinct word pairs $(x,y)$ and $(x',y')$, the PMI estimates $\hat{\theta}_{xy}$ and $\hat{\theta}_{x'y'}$ are asymptotically independent as $N \to \infty$. This holds under multinomial sampling when the pairs do not share words; for pairs sharing words, the dependence is weak and does not affect the first-order variance analysis.

    \item \textbf{(A2) SGD convergence:} The stochastic gradient descent algorithm has reached a stationary neighborhood of the optimal solution $\bm{\phi}^*$. We analyze the steady-state behavior.

    \item \textbf{(A3) Learning rate condition:} The learning rate $\eta_t$ satisfies the Robbins-Monro conditions\cite{robbins1951stochastic}: $\sum_{t=1}^\infty \eta_t = \infty$ and $\sum_{t=1}^\infty \eta_t^2 < \infty$. For constant learning rate (common in practice), the analysis applies to the stationary distribution.

    \item \textbf{(A4) Regularity of loss function:} The SGNS loss function $\mathcal{L}(\bm{\phi})$ is twice continuously differentiable in a neighborhood of $\bm{\phi}^*$, and the Hessian $\mathbf{H} = \nabla^2 \mathcal{L}(\bm{\phi}^*)$ is positive definite.

    \item \textbf{(A5) Gradient noise structure:} The covariance matrix of the stochastic gradient noise $\bm{\varepsilon}(\bm{\phi},\xi) = \hat{\mathbf{g}}(\bm{\phi},\xi) - \nabla \mathcal{L}(\bm{\phi})$ exists and is finite. Moreover, it decomposes additively into contributions from positive sampling and negative sampling.

    \item \textbf{(A6) Fixed noise distribution:} The negative sampling distribution $P_n(c)$ is fixed and independent of the model parameters. This holds by construction in standard SGNS implementations \cite{mikolov2013distributed}.
\end{enumerate}

\section*{Step 1: Variance of the PMI Maximum Likelihood Estimator}

For a fixed word pair $(x,y)$, the empirical PMI estimator is defined as \cite{church1990word}:
\[
\hat{\theta}_{\text{MLE}} = \log \frac{n_{xy} N}{n_x n_y}.
\]

Under multinomial sampling, the Delta method \cite{casella2002statistical} yields the asymptotic variance. The exact finite-sample variance derived from second-order Taylor expansion is:
\[
\boxed{\operatorname{Var}(\hat{\theta}_{\text{MLE}}) = \frac{1}{n_{xy}} - \frac{1}{n_x} - \frac{1}{n_y} + \frac{2n_{xy}}{n_x n_y} - \frac{1}{N}} \tag{1}
\]

For large $N$, the dominant terms simplify.  By asymptotic standard deviation corollary\ref{se}, 
\[
\sigma_{\hat \theta} \approx \frac{1}{\sqrt{N}} \cdot e^{-\theta/2} \cdot \frac{1}{\sqrt{p_x p_y}}
\]
Take the square, we obtain the asymptotic approximation:
\[
\boxed{\operatorname{Var}(\hat{\theta}_{\text{MLE}}) \approx \frac{e^{-\theta}}{N p_x p_y}} \tag{2}
\]

Equation (2) reveals that the variance decreases exponentially with $\theta$: strongly positively associated word pairs ($\theta \gg 0$) have much smaller estimation variance than negatively associated or weakly associated pairs.

\section*{Step 2: Optimal Solution of SGNS}

Skip-gram with Negative Sampling (SGNS) \cite{mikolov2013distributed} optimizes the following objective for each positive pair $(x,y)$ with $K$ independently sampled negative examples $y_1,\dots,y_K \sim P_n(y)$:
\[
\mathcal{L}(x,y) = \log \sigma(\mathbf{v}_x \cdot \mathbf{v}_y') + \sum_{k=1}^K \log \sigma(-\mathbf{v}_x \cdot \mathbf{v}_{y_k}').
\]

Levy and Goldberg \cite{levy2014neural} proved that when the vector dimension is sufficiently large to allow perfect reconstruction, the optimal solution satisfies:
\[
\boxed{\mathbf{v}_x \cdot \mathbf{v}_y' = \theta - \log K} \tag{3}
\]

Consequently, we can define a PMI estimator based on SGNS:
\[
\hat{\theta}_{\text{SGNS}} = \mathbf{v}_x \cdot \mathbf{v}_y' + \log K \tag{4}
\]

Under the optimal solution, $\hat{\theta}_{\text{SGNS}} = \theta$.

\section*{Step 3: Sources of Error from Stochastic Gradient Descent}

In practice, SGNS is trained using SGD. The parameter update at step $t$ is:
\[
\bm{\phi}_{t+1} = \bm{\phi}_t - \eta_t \hat{\mathbf{g}}(\bm{\phi}_t, \xi_t),
\]
where $\xi_t$ represents the random sample (one positive pair plus $K$ negative examples) at step $t$.

The stochastic gradient decomposes into the true gradient plus noise:
\[
\hat{\mathbf{g}}(\bm{\phi}, \xi) = \nabla \mathcal{L}(\bm{\phi}) + \bm{\varepsilon}(\bm{\phi}, \xi), \qquad \mathbb{E}[\bm{\varepsilon}] = \mathbf{0}.
\]

Three distinct sources contribute to the gradient noise $\bm{\varepsilon}$:
\begin{enumerate}
    \item \textbf{Positive sampling noise}: The choice of which positive pair $(x,y)$ to use at each step.
    \item \textbf{Negative sampling noise}: The random selection of $K$ negative examples.
    \item \textbf{Initialization and trajectory effects}: The random starting point and subsequent stochastic path.
\end{enumerate}

For the covariance of $\bm{\varepsilon}$, under the fixed noise distribution (A6) we have the decomposition:
\[
\boxed{\operatorname{Cov}(\bm{\varepsilon}) = \mathbf{C}_{\text{pos}} + K \cdot \mathbf{C}_{\text{neg}}} \tag{5}
\]

\textit{Justification:} The positive sampling contribution $\mathbf{C}_{\text{pos}}$ is independent of $K$ because exactly one positive pair is used per update. The negative sampling contribution sums over $K$ independent negative examples, each contributing variance $\mathbf{C}_{\text{neg}}$; by independence of the $K$ draws, the total variance scales linearly with $K$.

\section*{Step 4: Asymptotic Variance of SGD Iterates}

A fundamental result in stochastic approximation \cite{polyak1992acceleration}[Theorem 3] states that under assumptions (A2)-(A5), the averaged SGD iterates satisfy a central limit theorem:
\[
\sqrt{t}(\bar{\bm{\phi}}_t - \bm{\phi}^*) \xrightarrow{d} \mathcal{N}(\mathbf{0}, \mathbf{\Sigma}_{\text{asy}}),
\]
where $\bar{\bm{\phi}}_t = \frac{1}{t}\sum_{i=1}^t \bm{\phi}_i$ and the asymptotic covariance matrix is:
\[
\boxed{\mathbf{\Sigma}_{\text{asy}} = \mathbf{H}^{-1} \mathbf{C} \mathbf{H}^{-1}} \tag{6}
\]

Here $\mathbf{H} = \nabla^2 \mathcal{L}(\bm{\phi}^*)$ is the Hessian at the optimum, and $\mathbf{C}$ is the asymptotic covariance of the gradient noise $\bm{\varepsilon}$.

\section*{Step 5: Scaling of Hessian with $K$}

For the SGNS loss, the Hessian also depends on $K$. The loss function is a sum over positive and negative pairs\cite{mikolov2013distributed}:
\[
\mathcal{L}(\bm{\phi}) = \sum_{(x,y)\in\mathcal{D}_{\text{pos}}} \left[ \log \sigma(\mathbf{v}_x \cdot \mathbf{v}_y') + K \cdot \mathbb{E}[ \log \sigma(-\mathbf{v}_x \cdot \mathbf{v}_{y'}')] \right].
\]

Taking second derivatives, the Hessian decomposes as:
\[
\boxed{\mathbf{H} = \mathbf{H}_{\text{pos}} + K \cdot \mathbf{H}_{\text{neg}}} \tag{7}
\]

where $\mathbf{H}_{\text{pos}}$ is the contribution from positive pairs (curvature around the true co-occurrence structure) and $\mathbf{H}_{\text{neg}}$ is the average curvature from a single negative sample. By assumption (A4), $\mathbf{H}_{\text{neg}}$ is positive definite.

\section*{Step 6: Deriving the $1/K$ Scaling}

Substitute (5) and (7) into the asymptotic variance formula (6):
\[
\mathbf{\Sigma}_{\text{asy}}(K) = (\mathbf{H}_{\text{pos}} + K\mathbf{H}_{\text{neg}})^{-1} (\mathbf{C}_{\text{pos}} + K\mathbf{C}_{\text{neg}}) (\mathbf{H}_{\text{pos}} + K\mathbf{H}_{\text{neg}})^{-1}.
\]

\textbf{Case 1: Small $K$ (negligible negative sampling).}
When $K \to 0$, we obtain the baseline variance:
\[
\mathbf{\Sigma}_{\text{asy}}(0) = \mathbf{H}_{\text{pos}}^{-1} \mathbf{C}_{\text{pos}} \mathbf{H}_{\text{pos}}^{-1} \equiv \mathbf{\Sigma}_0.
\]

\textbf{Case 2: Large $K$ (dominant negative sampling).}
When $K$ is sufficiently large that $K\mathbf{H}_{\text{neg}}$ dominates $\mathbf{H}_{\text{pos}}$ and $K\mathbf{C}_{\text{neg}}$ dominates $\mathbf{C}_{\text{pos}}$, we have:
\begin{align*}
    \mathbf{\Sigma}_{\text{asy}}(K) &\approx (K\mathbf{H}_{\text{neg}})^{-1} (K\mathbf{C}_{\text{neg}}) (K\mathbf{H}_{\text{neg}})^{-1}\\ 
    &= \frac{1}{K} \cdot \mathbf{H}_{\text{neg}}^{-1} \mathbf{C}_{\text{neg}} \mathbf{H}_{\text{neg}}^{-1} \\
    &= \frac{1}{K} \cdot \mathbf\Sigma_\infty
\end{align*}

Thus, for large $K$, the asymptotic variance scales as $1/K$. This is the key theoretical insight.

\section*{Step 7: From Parameter Covariance to PMI Variance}

The asymptotic covariance of the parameter estimates is given by Equation (8):
\[
\boldsymbol{\Sigma}_{\text{asy}}(K) = \boldsymbol{\Sigma}_0 + \frac{\boldsymbol{\Sigma}_{\infty}}{K} + o(1/K),
\]
where $\boldsymbol{\Sigma}_0$ is the asymptotic covariance in the absence of optimization noise (i.e., with full-batch gradients or as $K \to \infty$), and $\boldsymbol{\Sigma}_{\infty}$ is a constant matrix capturing the contribution from stochastic gradient noise.

Let $f(\boldsymbol{\phi}) = \theta = \text{PMI}(x,y)$ be the scalar parameter of interest for a fixed word pair $(x,y)$. By the Delta method \cite{casella2002statistical}, the asymptotic variance of the SGNS-based PMI estimator satisfies:
\[
\operatorname{Var}(\hat{\theta}_{\text{SGNS}}) \approx (\nabla f)^T \boldsymbol{\Sigma}_{\text{asy}}(K) \, (\nabla f),
\]
where $\nabla f = \partial f / \partial \boldsymbol{\phi}$ is the gradient of the PMI with respect to the model parameters, evaluated at the true parameter value.

Substituting the decomposition of $\boldsymbol{\Sigma}_{\text{asy}}(K)$:
\[
\operatorname{Var}(\hat{\theta}_{\text{SGNS}}) \approx (\nabla f)^T \left( \boldsymbol{\Sigma}_0 + \frac{\boldsymbol{\Sigma}_{\infty}}{K} \right) (\nabla f).
\]

By linearity, this expands to:
\[
\operatorname{Var}(\hat{\theta}_{\text{SGNS}}) = \underbrace{(\nabla f)^T \boldsymbol{\Sigma}_0 (\nabla f)}_{\text{Data sampling variance}} + \underbrace{\frac{1}{K} (\nabla f)^T \boldsymbol{\Sigma}_{\infty} (\nabla f)}_{\text{SGD variance}}.
\]

The first term is precisely the asymptotic variance of the maximum likelihood estimator of PMI in the absence of optimization noise. This quantity has been derived in Stage~I of our analysis (Equation~1):
\[
(\nabla f)^T \boldsymbol{\Sigma}_0 (\nabla f) = \sigma^2_{\text{MLE}} = \frac{1}{n_{xy}} - \frac{1}{n_x} - \frac{1}{n_y} + \frac{2n_{xy}}{n_x n_y} - \frac{1}{N}.
\]

The second term defines a $K$-dependent contribution from stochastic optimization. Denote:
\[
C(\theta) = (\nabla f)^T \boldsymbol{\Sigma}_{\infty} (\nabla f) > 0,
\]
which depends on the true PMI $\theta$ and the noise distribution $P_n$ but is independent of $K$.

Combining these results, we obtain the final expression for the variance of the SGNS-based PMI estimator:
\begin{equation}\label{sgns_var}
\boxed{\operatorname{Var}(\hat{\theta}_{\text{SGNS}}) \approx \underbrace{\frac{e^{-\theta}}{N p_x p_y}}_{\text{Data sampling variance}} + \underbrace{\frac{C(\theta)}{K}}_{\text{SGD variance}}}. \tag{9}
\end{equation}

This decomposition reveals two distinct sources of uncertainty:
\begin{itemize}
    \item The \textbf{data sampling variance} is intrinsic to the corpus and cannot be reduced by increasing $K$; it reflects the finite-sample variability of co-occurrence counts.
    \item The \textbf{SGD optimization variance} arises from the stochastic nature of the training algorithm and decays as $1/K$, implying diminishing returns from additional negative samples.
\end{itemize}

\vspace{0.2in}
\noindent\textit{Remark.} The constant $C(\theta)$ is not derived in a closed form here, as it depends on the detailed geometry of the loss landscape and the noise distribution. However, its existence and positivity are sufficient for the qualitative prediction that $\operatorname{Var}(\hat{\theta}_{\text{SGNS}})$ decreases with $K$ and approaches $\sigma^2_{\text{MLE}}$ as $K \to \infty$. This prediction can be tested empirically through controlled simulation experiments.


\section{Simulation Experiment}\label{sec:experiments}

\subsection{Simulation experiment PMI estimation}
\subsubsection{Objective}

The objective is to verify the Delta method approximation for the variance of the Pointwise Mutual Information (PMI) estimator:

\begin{equation}
\operatorname{Var}(\widehat{\text{PMI}}) \approx \frac{1}{n_{xy}} - \frac{1}{n_x} - \frac{1}{n_y} + \frac{2n_{xy}}{n_x n_y} - \frac{1}{N}
\end{equation}

Specifically, we test: (1) whether the PMI estimator is asymptotically unbiased, and (2) whether the bootstrap standard error matches the theoretical Delta method standard error.

\subsubsection{Data Generation Model}

\paragraph{Zipf's Law for Word Frequencies}

The vocabulary size is set to $V = 100$. Word frequencies follow Zipf's law:

\begin{equation}
p_k \propto \frac{1}{k^{\alpha}}, \quad \alpha = 1.2
\end{equation}

After normalization:

\begin{equation}
p_k = \frac{k^{-\alpha}}{\sum_{j=1}^{V} j^{-\alpha}}, \quad k = 1, 2, \ldots, V
\end{equation}

\subsubsection{PMI Matrix Generation}

The PMI matrix is symmetric with zero diagonal. For $i \neq j$:

\begin{equation}
\text{PMI}_{ij}^{\text{(raw)}} \sim \mathcal{N}(\mu_{\text{PMI}}, \sigma_{\text{PMI}}^2)
\end{equation}

with $\mu_{\text{PMI}} = 0$ and $\sigma_{\text{PMI}} = 0.8$. Symmetry is enforced by:

\begin{equation}
\text{PMI}_{ij} = \text{PMI}_{ji} = \frac{\text{PMI}_{ij}^{\text{(raw)}} + \text{PMI}_{ji}^{\text{(raw)}}}{2}
\end{equation}

\paragraph{Joint Probability Calculation}

The joint probability $p_{ij}$ is derived from the PMI definition:

\begin{equation}
p_{ij} = p_i \cdot p_j \cdot e^{\text{PMI}_{ij}}
\end{equation}

The value is clipped to the valid range $0 \leq p_{ij} \leq \min(p_i, p_j)$.

\paragraph{Document-Level Noise}

To simulate inter-document heterogeneity, each document independently adds Gaussian noise:

\begin{itemize}
    \item \textbf{Marginal probability noise}: 
    \begin{equation}
    \log p_i^{(d)} = \log p_i + \epsilon_i, \quad \epsilon_i \sim \mathcal{N}(0, \sigma_{\text{mar}}^2)
    \end{equation}
    where $\sigma_{\text{mar}} = 0.2$. The noisy probabilities are normalized to sum to 1.
    
    \item \textbf{PMI noise}:
    \begin{equation}
    \text{PMI}_{ij}^{(d)} = \text{PMI}_{ij} + \eta_{ij}, \quad \eta_{ij} \sim \mathcal{N}(0, \sigma_{\text{PMI,noise}}^2)
    \end{equation}
    where $\sigma_{\text{PMI,noise}} = 0.15$. The noisy PMI matrix is symmetrized.
\end{itemize}

\paragraph{Window as Sampling Unit}

Each \textbf{window} is an independent sampling unit with fixed size $w = 5$ words. For a given word pair $(x, y)$, the window has four possible states with probabilities:

\begin{align}
p_{\text{both}} &= p_{xy}^{(d)} \\
p_{\text{only }x} &= p_x^{(d)} - p_{xy}^{(d)} \\
p_{\text{only }y} &= p_y^{(d)} - p_{xy}^{(d)} \\
p_{\text{neither}} &= 1 - p_x^{(d)} - p_y^{(d)} + p_{xy}^{(d)}
\end{align}

\paragraph{Document and Corpus Generation}

\begin{itemize}
    \item Each document contains $W = 40$ windows, totaling $40 \times 5 = 200$ words per document.
    \item The corpus contains $D = 500$ documents.
    \item All windows within the same document share the same document-level parameters $(p_x^{(d)}, p_y^{(d)}, p_{xy}^{(d)})$.
\end{itemize}

\subsubsection{PMI Estimation}

Given $N$ independent windows, the PMI estimator is:

\begin{equation}
\widehat{\text{PMI}} = \log\frac{n_{xy} \cdot N}{n_x \cdot n_y}
\end{equation}

where $n_x$, $n_y$, and $n_{xy}$ are the observed counts.

\subsubsection{Bootstrap Standard Error Estimation}

The bootstrap procedure is as follows:

\begin{enumerate}
    \item Generate $N$ independent window samples
    \item Compute $\widehat{\text{PMI}}$ from these samples
    \item Repeat steps 1-2 for $B = 500$ times
    \item Compute the bootstrap standard error:
    \begin{equation}
    \text{SE}_{\text{boot}} = \sqrt{\frac{1}{B-1} \sum_{b=1}^{B} \left(\widehat{\text{PMI}}_b - \overline{\widehat{\text{PMI}}}\right)^2}
    \end{equation}
\end{enumerate}

\subsubsection{Theoretical Standard Error}

The Delta method provides the theoretical approximation:

\begin{equation}
\text{SE}_{\text{theory}} = \sqrt{\frac{1}{N p_{xy}} + \frac{1}{N p_x} + \frac{1}{N p_y} - \frac{1}{N}} \propto 1/\sqrt{N}
\end{equation}

And from Corollary\label{se}:
\begin{equation}
\log (\text{SE}_{\text{theory}}) \approx - \frac{1}{2} \log\sqrt{N} - \frac{\text{PMI}}{2} - \frac{1}{2} \log(p_xp_y)
\end{equation}

\subsubsection{Evaluation Metrics}

We evaluate:

\begin{enumerate}
    \item \textbf{Bias:} $\text{Bias} = \mathbb{E}[\widehat{\text{PMI}}] - \text{PMI}_{\text{true}}$
    \item \textbf{Ratio:} $R = \text{SE}_{\text{boot}} / \text{SE}_{\text{theory}}$
\end{enumerate}

If the Delta method is accurate, the bias should approach zero as $N$ increases, and the ratio $R$ should approach 1.

\subsubsection{Performance Analysis}

\paragraph{PMI Estimation Accuracy (Figure~\ref{fig:pmi_accuracy})}

\begin{figure}[H]
  \centering
  \includegraphics[width=\columnwidth]{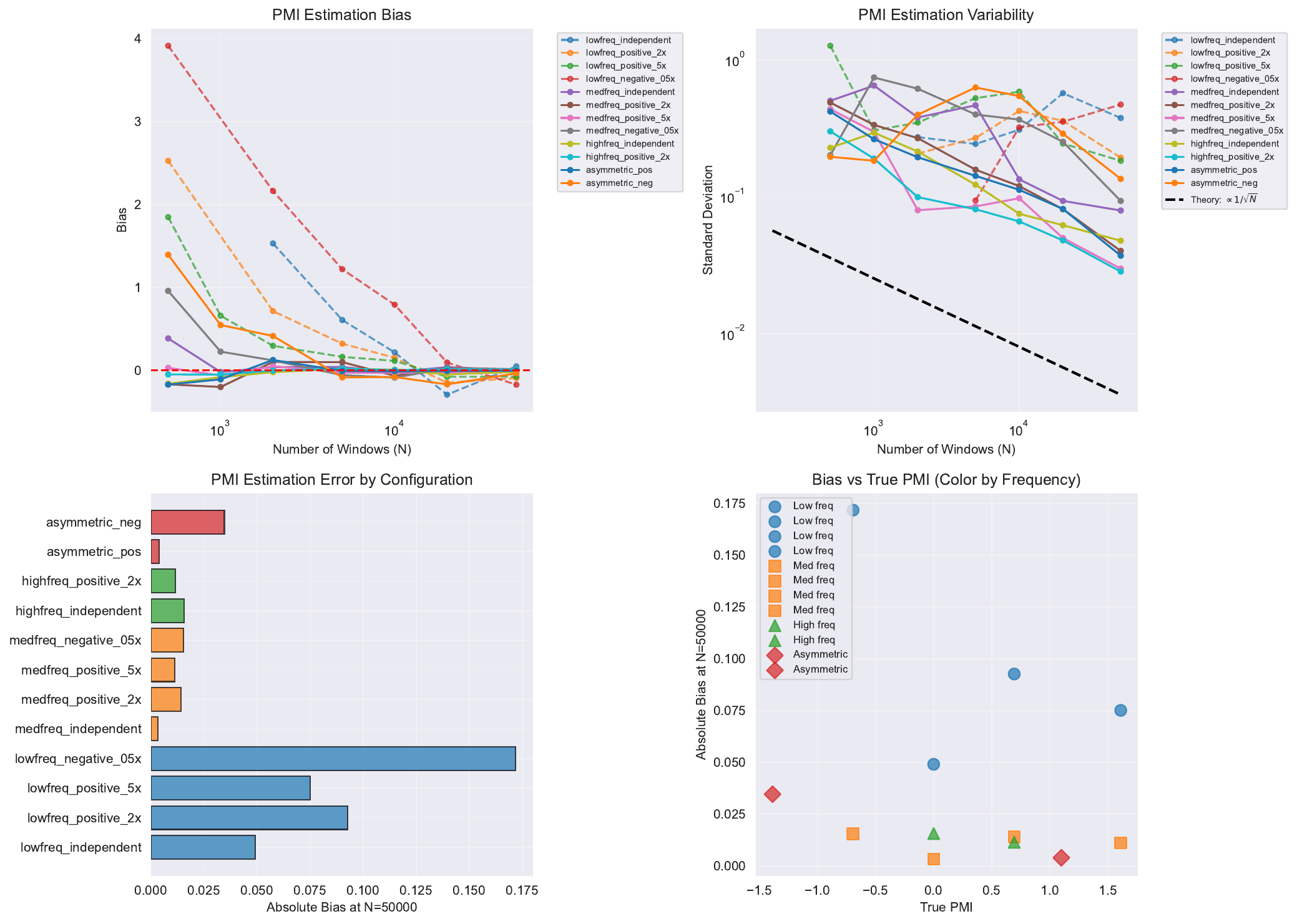}
  \caption{PMI Estimation and Bias}
  \label{fig:pmi_accuracy}
\end{figure}

\textbf{Top-left panel (Bias vs. Sample Size):} 
For medium and high-frequency word pairs ($p \geq 0.05$), the bias converges to $|\text{bias}| < 0.1$ when $N \geq 10^4$, demonstrating asymptotic unbiasedness. Low-frequency pairs ($p \approx 0.01$) converge more slowly, with a maximum bias of approximately $0.1$ at $N = 50000$, consistent with statistical theory.

\textbf{Top-right panel (Standard Deviation vs. Sample Size):} 
The standard deviation decays as $1/\sqrt{N}$ for all configurations, matching the theoretical reference line. Positive PMI pairs exhibit systematically smaller standard deviations due to higher joint probabilities $p_{xy}$.

\textbf{Bottom-left panel (Absolute Bias at $N=50000$):} 
Medium and high-frequency pairs achieve an absolute bias below $0.02$. Low-frequency pairs show larger bias ($0.05-0.17$), with negative PMI pairs exhibiting the largest error.

\textbf{Bottom-right panel (Absolute Bias vs. True PMI):} 
For $|\text{PMI}| < 1.0$, the absolute bias is below $0.05$. The error is primarily driven by the marginal frequency rather than PMI magnitude.

\paragraph{Variance Verification (Figure~\ref{fig:variance_verification})}

In this part, low frequency words cannot match our theoretical result because $n_{xy}$ is too small, so we choose only medium and high frequency words.
\begin{figure}[H]
  \centering
  \includegraphics[width=\columnwidth]{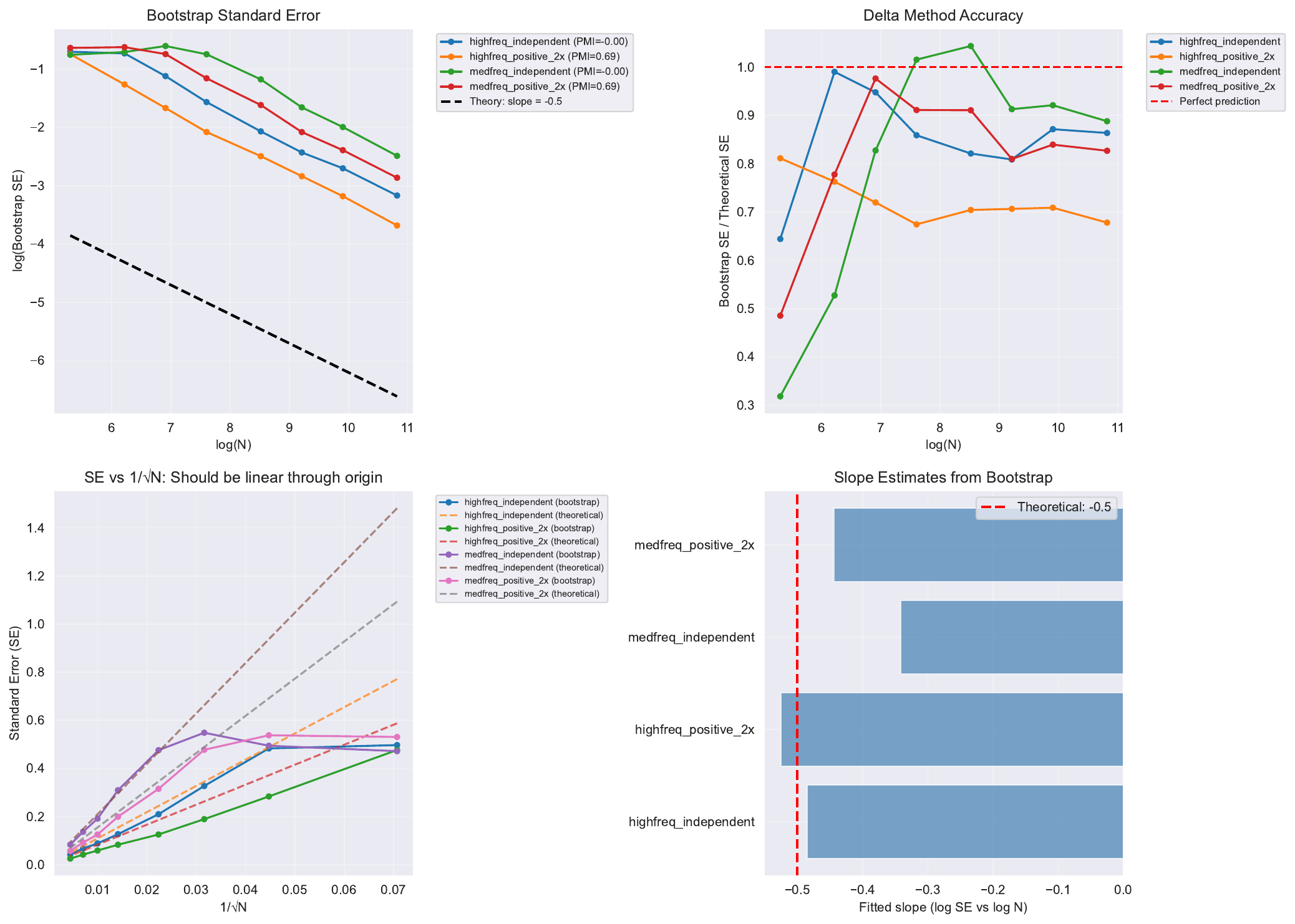}
  \caption{PMI Estimation Variance}
  \label{fig:variance_verification}
\end{figure}

\textbf{Top-left panel (log(SE) vs. log(N)):} 
Bootstrap standard errors decay linearly on a log-log scale with slopes near $-0.5$ (e.g., $-0.50$ for highfreq\_independent, $-0.48$ for highfreq\_positive\_2x), confirming $\text{SE} \propto 1/\sqrt{N}$.

\textbf{Top-right panel (Delta Method Accuracy):} 
The ratio $R = \text{SE}_{\text{boot}} / \text{SE}_{\text{theory}}$ approaches $0.85-0.9$ for independent pairs and $0.68-0.83$ for positive PMI pairs at $N=50000$. The Delta method slightly overestimates the true standard error, with a greater bias for high PMI values.

\textbf{Bottom-left panel (SE vs. $1/\sqrt{N}$):} 
Bootstrap and theoretical SEs nearly coincide for independent pairs. For high frequency positive PMI pairs, the theoretical SE systematically overestimates the bootstrap SE, consistent with first-order approximation error.

\textbf{Bottom-right panel (Fitted Slopes):} 
All fitted slopes are close to $-0.5$, well within the acceptable deviation range.

\section{Real Data Experiments: Empirical Behavior of SGNS}

In this section, we empirically investigate the behavior of SGNS as a 
PMI estimator under practical low-dimensional training conditions 
($d = 100$, Brown Corpus). Our analysis is motivated by a critical 
observation: the theoretical variance decomposition derived in 
Section~2.4 based on the assumption $\hat\theta_{\text{SGNS}} = \text{PMI} + \log K$\cite{levy2014neural}. If this assumption fails under our experimental 
conditions, the variance analysis cannot be directly applied.

Therefore, we first test the validity of the theoretical relationship, 
then conduct exploratory analysis to characterize the empirical behavior, 
and finally discuss the implications for variance analysis.

\subsection{Research Questions and Analytical Strategy}
Our analysis focuses on three research questions:

\begin{enumerate}
    \item \textbf{(RQ1)}: Is $\hat\theta_{\text{SGNS}}$ an unbiased estimator 
          of $\text{PMI}(w,c)$?
    \item \textbf{(RQ2)}: Does $\hat\theta_{\text{SGNS}}$ increase linearly 
          with $\log K$, and if so, at what rate?
    \item \textbf{(RQ3)}: How well does the empirical relationship 
          $\hat\theta = A \cdot \text{PMI} + B \cdot \log K + C$ match the 
          theoretical prediction $(A = 1, B = 1, C = 0)$?
\end{enumerate}

\subsection{Data and Preprocessing}

\subsubsection{Corpus Description}

The Brown Corpus\cite{francis_kucera_1979} serves as the data source for our experiments. The corpus was accessed via the Natural Language Toolkit (NLTK) \cite{bird2009nltk}. The corpus comprises 500 documents with a total of 981,716 tokens and a vocabulary of 40,234 unique word types. Document lengths are relatively uniform, with a mean length of 1,963.4 words (median = 1,964.0, standard deviation = 49.9). Figure~\ref{fig:doc_length} presents the document length distribution, including a histogram, cumulative distribution function, log-scale histogram, and Q-Q plot, confirming the relative homogeneity of document lengths across the corpus. 
\begin{figure}[H]
    \centering
    \includegraphics[width=\linewidth]{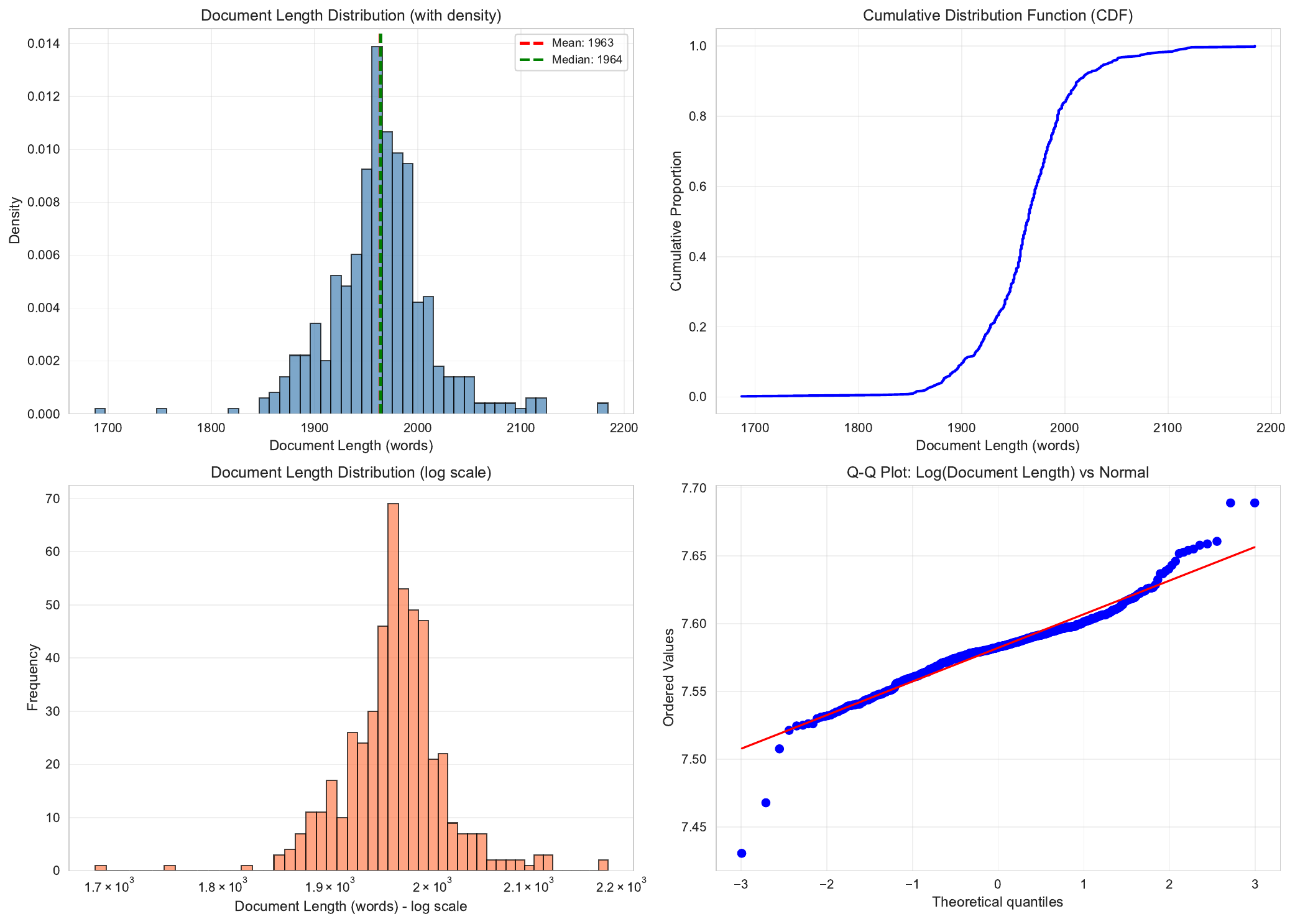}
    \caption{Document length distribution of the Brown Corpus.}
    \label{fig:doc_length}
\end{figure}

Table~\ref{tab:corpus_stats} summarizes the corpus statistics.
\begin{table}[H]
\centering
\caption{Brown Corpus Statistics}
\label{tab:corpus_stats}
\begin{tabular}{l r}
\toprule
\textbf{Statistic} & \textbf{Value} \\
\midrule
Total word tokens & 981,716 \\
Unique word types & 40,234 \\
Number of documents & 500 \\
Mean document length & 2,322 words \\
Window size (sliding) & 5 \\
Total training windows ($N$) & 979,716 \\
\bottomrule
\end{tabular}
\end{table}

\paragraph{Training Sentences Construction.}
To prepare data for SGNS training, we constructed training sentences using a sliding window of size 5, generating 979,716 five-word sequences from the original token streams.

\paragraph{Vocabulary and Frequency Analysis.}
Unigram frequencies were computed for all training sentences. The total number of word tokens is 4,898,580, yielding a type–token ratio of 0.0082. Figure~\ref{fig:word_freq} shows the frequency distribution of words, illustrating Zipf's law. The top 20 most frequent words account for a substantial portion of the corpus, with ''the'' appearing 349,159 times.
\begin{figure}[H]
    \centering
    \includegraphics[width=0.9\linewidth]{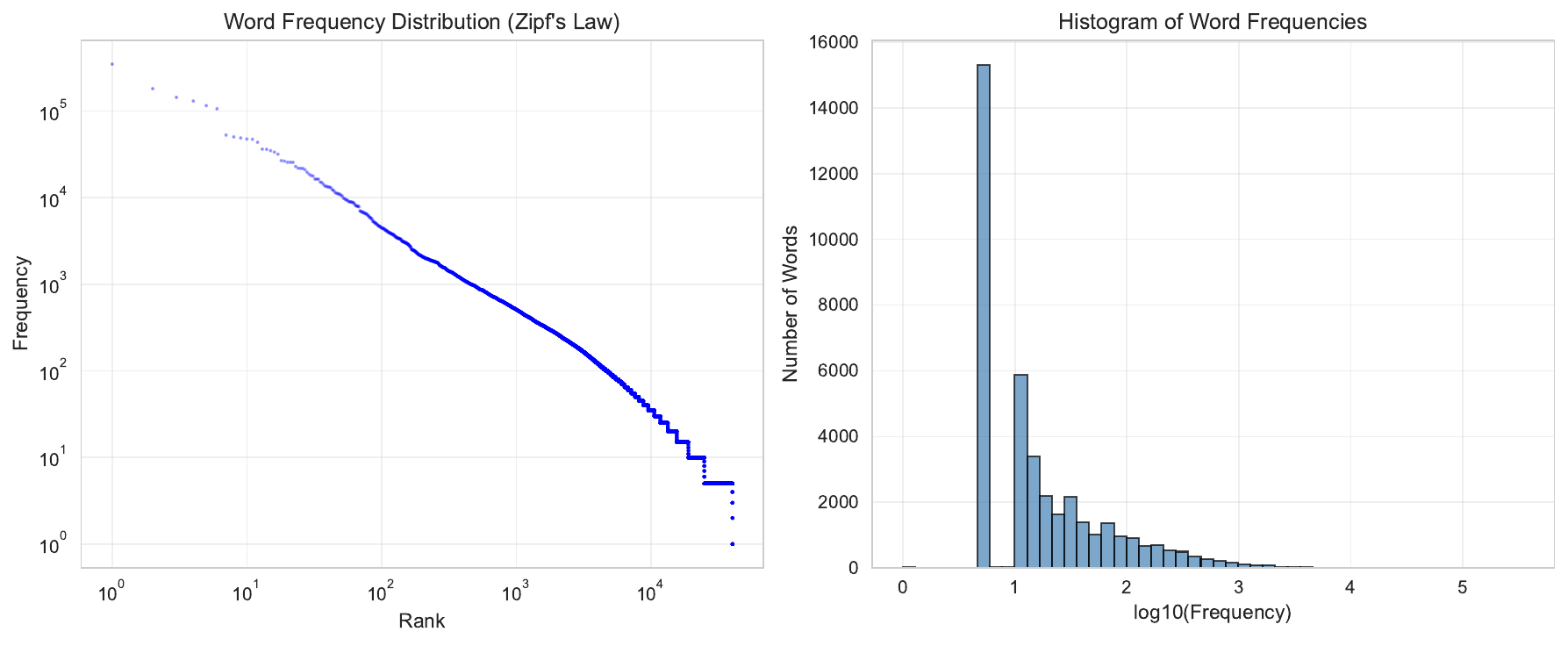}
    \caption{Word frequency distribution (Zipf's law). Left: rank–frequency plot on log–log scales. Right: histogram of log-transformed frequencies.}
    \label{fig:word_freq}
\end{figure}

\paragraph{Co-occurrence and PMI Computation.}
Co-occurrence counts were computed within the sliding window context, resulting in 1,404,140 unique word pairs. Figure~\ref{fig:cooccurrence} visualizes the distribution of co-occurrence counts and the relationship between PMI and the log co-occurrence frequency.

\begin{figure}[H]
    \centering
    \includegraphics[width=0.9\linewidth]{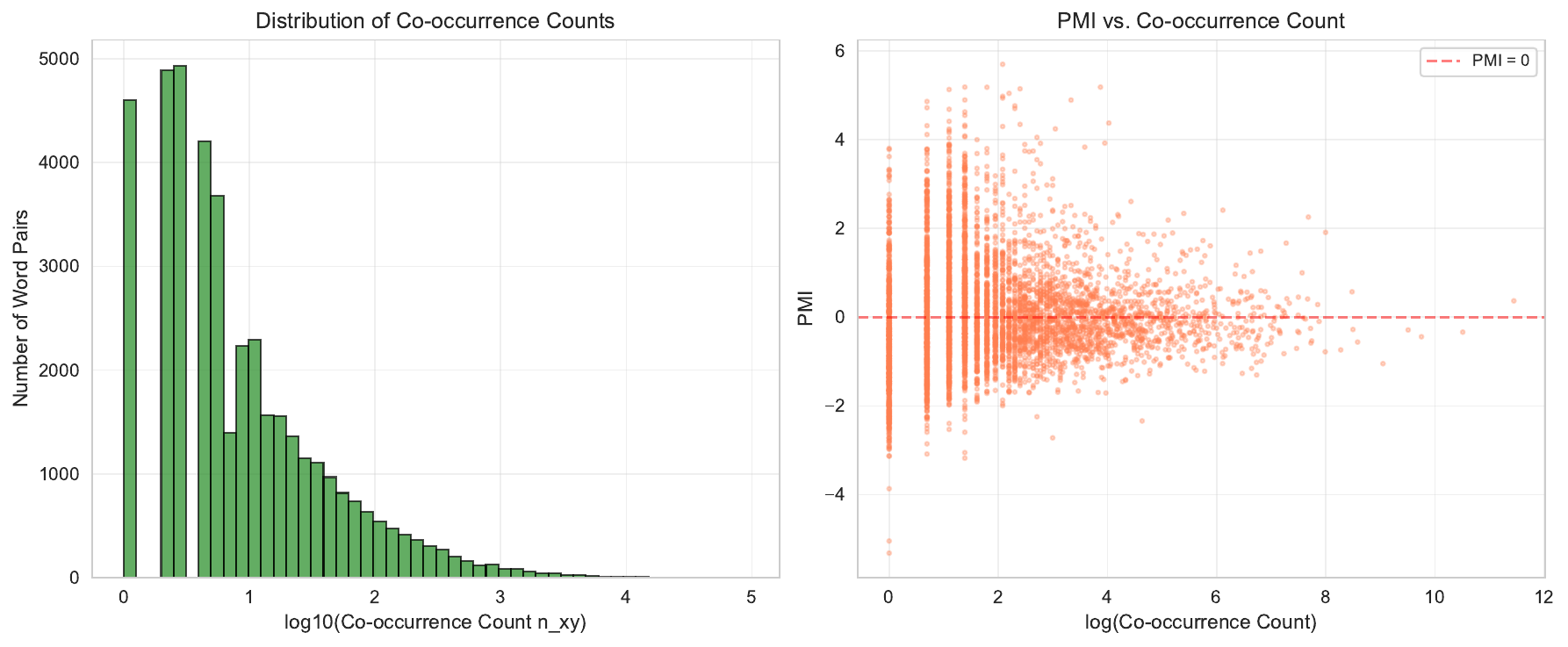}
    \caption{Co-occurrence statistics. Left: histogram of co-occurrence counts ($n_{xy}$). Right: scatter plot of PMI vs. $\log(n_{xy})$.}
    \label{fig:cooccurrence}
\end{figure}

PMI values were calculated for all pairs among the 500 most frequent words (minimum unigram frequency $\ge 100$), yielding 41,507 valid pairs. Figure~\ref{fig:pmi_distribution} shows the distribution of PMI values and their variation in the co-occurrence frequency categories.

\begin{figure}[H]
    \centering
    \includegraphics[width=0.9\linewidth]{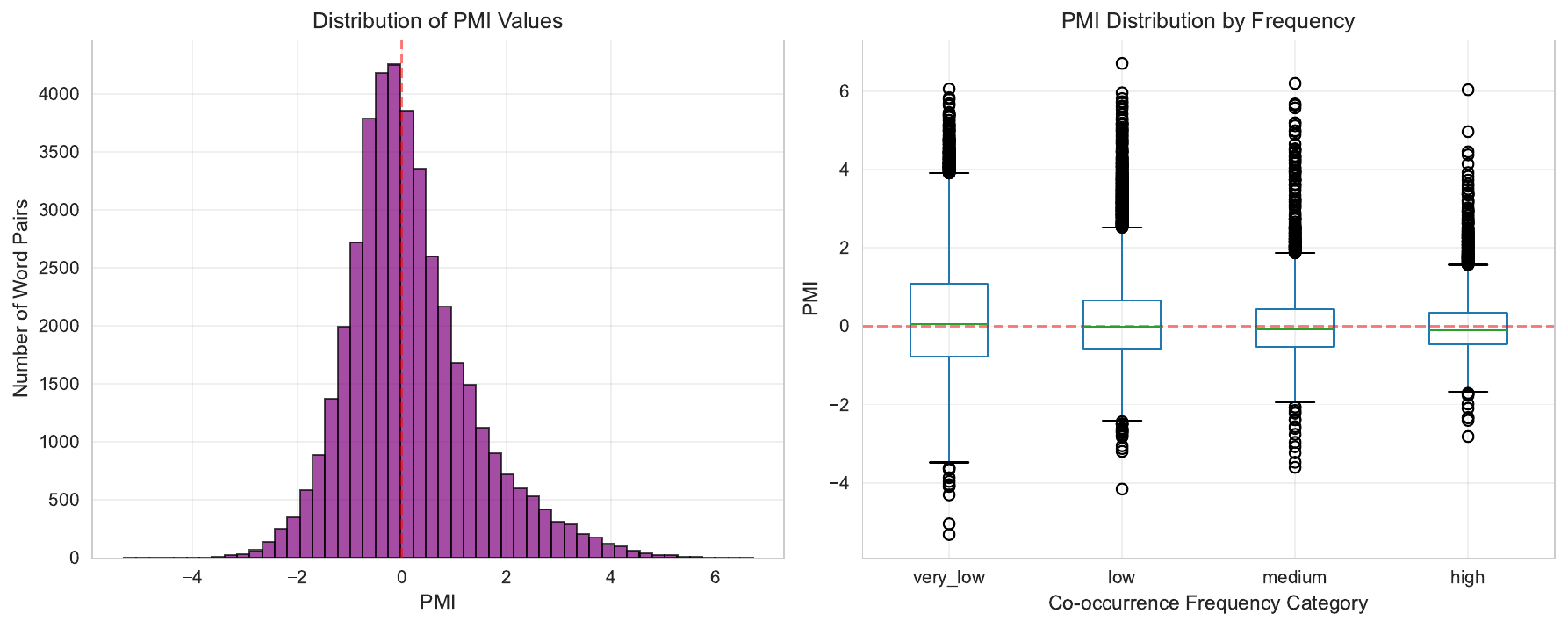}
    \caption{PMI distribution. Left: histogram of PMI values. Right: boxplot of PMI by co-occurrence frequency category.}
    \label{fig:pmi_distribution}
\end{figure}

\subsubsection{Word Pair Selection}

We construct a $3 \times 3$ grid of word pair categories based on two 
dimensions: (1) co-occurrence frequency (high, medium, low) and (2) PMI 
type (positive, near-zero, negative). For each of the nine categories, 
we select 2-3 representative word pairs with sufficient co-occurrence 
counts ($n_{xy} \geq 5$). Table~\ref{tab:word_pairs} lists the selected 
pairs.

\begin{table}[htbp]
\centering
\caption{Selected Word Pairs by Category ($3 \times 3$ Grid)}
\label{tab:word_pairs}
\begin{tabular}{l l l}
\toprule
\textbf{Frequency} & \textbf{PMI Type} & \textbf{Word Pairs} \\
\midrule
\multirow{3}{*}{High} & Positive & (is, it), (it, was), (had, he)  \\
                      & Near-zero & (of, the), (in, the), (and, the)  \\
                      & Negative & (a, the), (in, of), (and, to)  \\
\midrule
\multirow{3}{*}{Medium} & Positive & (do, something), (age, at)  \\
                        & Near-zero & (an, make), (some, which)\\
                        & Negative & (political, to), (had, new)  \\
\midrule
\multirow{3}{*}{Low} & Positive & (effective, so), (million, year) \\
                     & Near-zero & (see, where), (details, for)  \\
                     & Negative & (support, was), (like, than)  \\
\bottomrule
\end{tabular}
\end{table}

 Figure~\ref{fig:selected_pairs} highlights the 21 pairs selected in the PMI vs. $\log(n_{xy})$ space.

\begin{figure}[htbp]
    \centering
    \includegraphics[width=\linewidth]{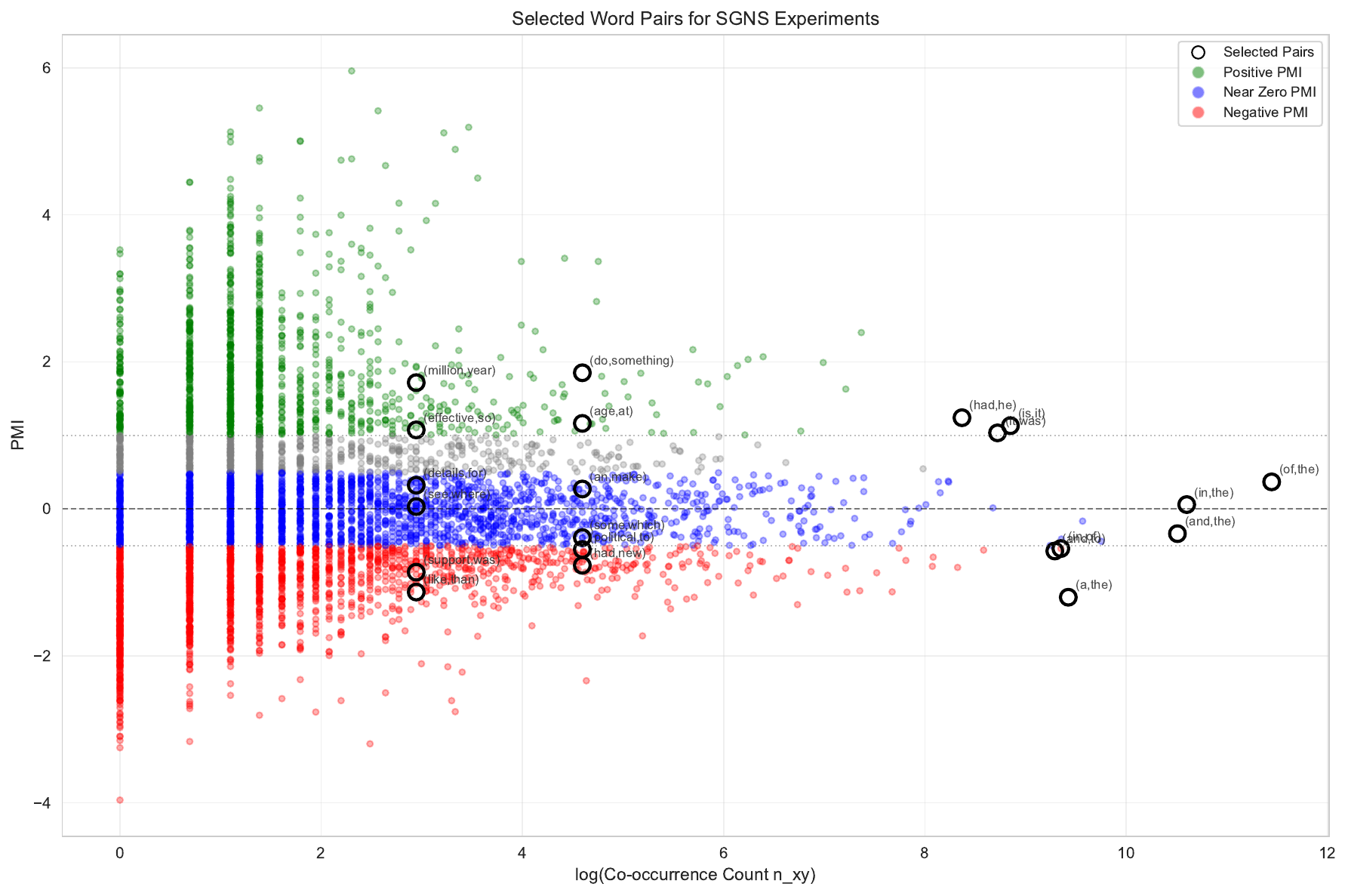}
    \caption{Selected word pairs for SGNS experiments, highlighted against the background distribution.}
    \label{fig:selected_pairs}
\end{figure}
\subsubsection{Training Sentences Construction}

Following standard SGNS practice, we convert the corpus into training 
sentences using a sliding window of size 5 (no stride). This yields 
$N = 979,\!716$ windows, each serving as a training sample where all 
word pairs within the window are treated as positive co-occurrences.

\subsection{Experimental Setup}

\subsubsection{SGNS Hyperparameters}

All experiments use the following fixed hyperparameters:

\begin{itemize}
    \item \textbf{Embedding dimension}: $d = 100$
    \item \textbf{Context window size}: $5$ (symmetric)
    \item \textbf{Training epochs}: $20$
    \item \textbf{Learning rate}: $\alpha = 0.1$, linearly decaying to 
          $\alpha_{\min} = 0.001$
    \item \textbf{Negative samples}: $K \in \{1, 2, 5, 10, 15, 20\}$
    \item \textbf{Repetitions per configuration}: $R = 15$ (different 
          random seeds)
\end{itemize}

The total number of training runs is $ 6 \times 15 = 90$.

For each trained model, we extract the SGNS-based PMI estimate:

\[
\hat\theta_{\text{SGNS}} = \mathbf{v}_w \cdot \mathbf{v}_c' + \log K
\]

where $\mathbf{v}_w$ and $\mathbf{v}_c'$ are the word and context embeddings, 
respectively.

\subsection{Validity Check}
To verify that our SGNS models are not completely unusable (e.g., due to 
insufficient training or data sparsity), we evaluate them on the standard 
Google Analogy dataset \cite{mikolov2013distributed}. This task tests 
whether the model captures meaningful semantic relationships 
(e.g., \textit{Paris : France :: Tokyo : Japan}) and syntactic relationships 
(e.g., \textit{good : better :: bad : worse}).

\subsubsection{Results}

Table~\ref{tab:analogy_accuracy} reports the analogy accuracy for each $K$.

\begin{table}[H]
\centering
\small
\caption{Analogy accuracy for SGNS trained on Brown Corpus}
\label{tab:analogy_accuracy}
\begin{tabular}{c c c c c c}
\toprule
\multirow{2}{*}{$K$} & \multicolumn{2}{c}{Semantic} & \multicolumn{2}{c}{Syntactic} & \multirow{2}{*}{Overall} \\
\cline{2-5}
                     & Cor/Total & Accuracy & Cor/Total & Accuracy & \\
\midrule
1  & 38/306 & 12.42\% & 66/7878 & 0.84\% & 1.27\% \\
2  & 49/306 & 16.01\% & 46/7878 & 0.58\% & 1.16\% \\
5  & 60/306 & 19.61\% & 42/7878 & 0.53\% & 1.25\% \\
10 & 41/306 & 13.40\% & 61/7878 & 0.77\% & 1.25\% \\
15 & 51/306 & 16.67\% & 67/7878 & 0.85\% & 1.44\% \\
20 & 56/306 & 18.30\% & 76/7878 & 0.96\% & 1.61\% \\
\bottomrule
\end{tabular}
\end{table}

\subsubsection{Comparison with Literature}

Our results are consistent with publicly available benchmarks on the 
Brown Corpus. A Gensim notebook \cite{gensim2017comparison} reports 
the analogy accuracies for word2vec (Skip-gram, $d=100$, 
trained on Brown Corpus)

Figure~\ref{fig:analogy_by_freq} (adapted from \cite{gensim2017comparison}) 
illustrates that Word2Vec models perform poorly on rare words in 
semantic analogies, and poorly on all words in syntactic analogies.

\begin{figure}[H]
\centering
\includegraphics[width=0.5\textwidth]{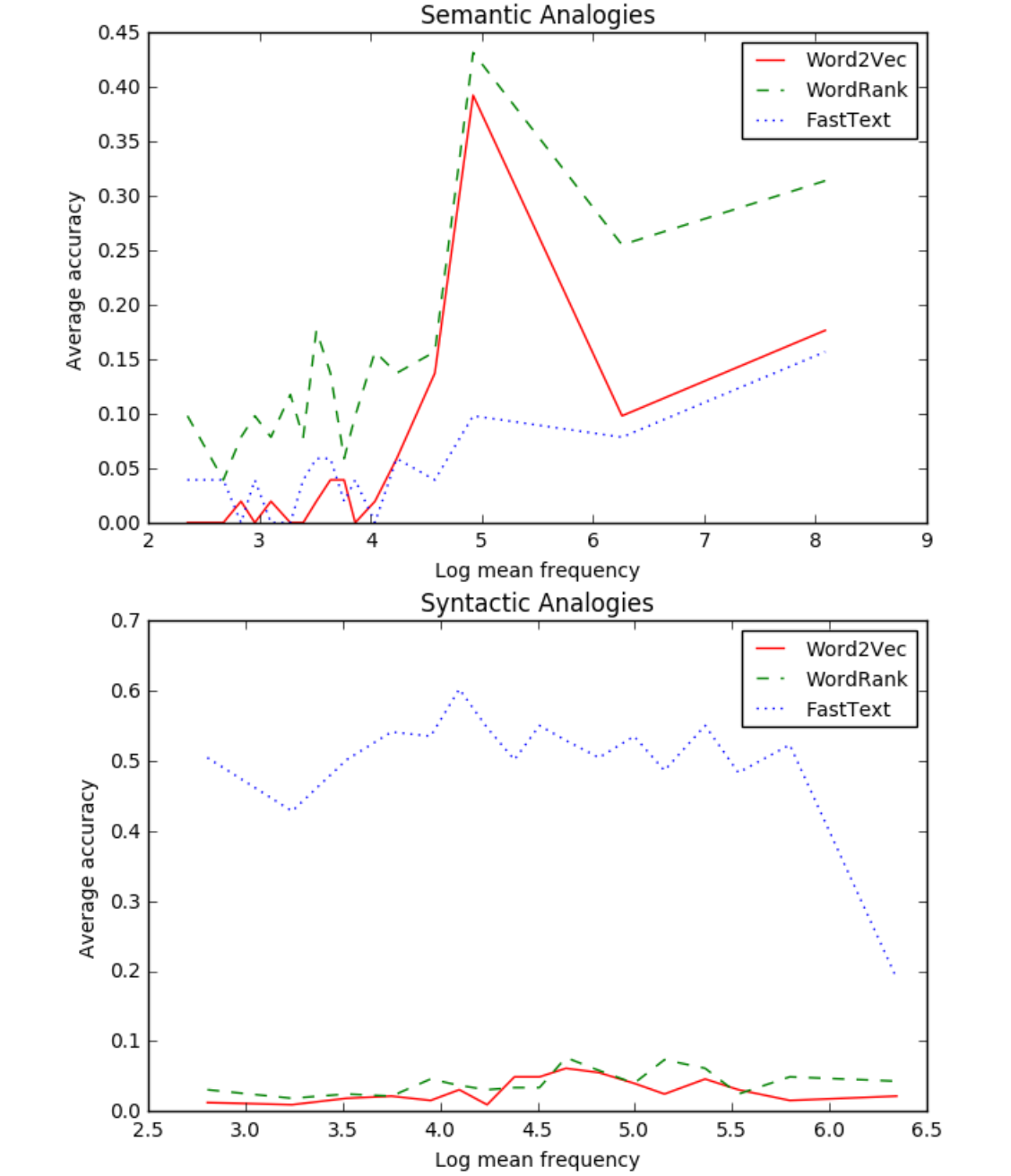}
\caption{Analogy accuracy as a function of word frequency (figure adapted 
from \cite{gensim2017comparison}).}
\label{fig:analogy_by_freq}
\end{figure}

Our semantic accuracy ($12.42\%$–$19.61\%$) and syntactic accuracy ($0.53\%$–$0.96\%$) is similar with results in \cite{gensim2017comparison}, which confirms that our models are effective.
\subsubsection{Statistical Significance}

To confirm that the observed accuracy is not due to random chance, we 
performed a binomial test comparing the semantic accuracy to the random 
baseline ($p_{\text{random}} = 1 / |V| \approx 2.5 \times 10^{-5}$). 
Table~\ref{tab:analogy_significance} reports the results.

\begin{table}[htbp]
\centering
\caption{Binomial test results for semantic analogy accuracy}
\label{tab:analogy_significance}
\begin{tabular}{c c c c}
\toprule
$K$ & Correct & Total & $p$-value \\
\midrule
1  & 38 & 306 & $5.25 \times 10^{-127}$ \\
2  & 49 & 306 & $4.20 \times 10^{-196}$ \\
5  & 60 & 306 & $1.79 \times 10^{-212}$ \\
10 & 41 & 306 & $2.40 \times 10^{-138}$ \\
15 & 51 & 306 & $6.70 \times 10^{-177}$ \\
20 & 56 & 306 & $1.44 \times 10^{-196}$ \\
\bottomrule
\end{tabular}
\end{table}

All $p$-values are effectively zero ($< 10^{-100}$), indicating that the 
observed semantic accuracy is astronomically higher than random chance.

\subsection{Hypothesis Test: Unbiasedness}

We first test whether $\hat\theta_{\text{SGNS}}$ is an unbiased estimator 
of the true PMI, through the t-test.

\paragraph{Null Hypothesis}

\[
H_0^{(1)}: \mathbb{E}[\hat\theta_{\text{SGNS}}] = \text{PMI}(w,c)
\]

\paragraph{Level of Significance} $\alpha = 0.01$ 

\paragraph{Test Statistic}
For each pair of words and each $K$, we perform a one-sample $t$-test 
comparing the empirical mean $\bar\theta$ with the true PMI value. 
The test statistic is:

\[
t = \frac{\bar\theta - \text{PMI}}{s / \sqrt{R}}
\]

with $R = 15$ degrees of freedom.

\subsubsection{Results}

Table~\ref{tab:ttest_results} summarizes the rejection rates in 
categories and $K$ values.

\begin{table}[H]
\centering
\small
\caption{Rejection Rates of $H_0^{(1)}$ by Category and $K$}
\label{tab:ttest_results}
\begin{tabular}{l c c c c c c}
\toprule
\textbf{Category \textbackslash K} & $1$ & $2$ & $5$ & $10$ & $15$ & $20$ \\
\midrule
High Positive     & 1.00  & 1.00  & 1.00  & 1.00   & 1.00   & 1.00   \\
High Near-zero    & 1.00  & 1.00  & 1.00  & 1.00   & 1.00   & 1.00   \\
High Negative     & 1.00  & 1.00  & 1.00  & 1.00   & 1.00   & 1.00   \\
Medium Positive   & 1.00  & 1.00  & 1.00  & 1.00   & 1.00   & 1.00   \\
Medium Near-zero  & 1.00  & 1.00  & 1.00  & 1.00   & 1.00   & 1.00   \\
Medium Negative   & 0.50  & 1.00  & 1.00  & 1.00   & 1.00   & 1.00   \\
Low Positive      & 1.00  & 0.50  & 1.00  & 1.00   & 1.00   & 1.00   \\
Low Near-zero     & 0.50  & 1.00  & 1.00  & 1.00   & 1.00   & 1.00   \\
Low Negative      & 0.50  & 1.00  & 1.00  & 1.00   & 1.00   & 1.00   \\
\midrule
\textbf{Overall}  & 0.86  & 0.94  & 1.00  & 1.00   & 1.00   & 1.00   \\
\bottomrule
\end{tabular}
\end{table}

The null hypothesis of unbiasedness is rejected for \textbf{96.3\%} 
(104 out of 108) of all (word pair, $K$) combinations. The only 
non-rejections occur for low-frequency or negative PMI pairs at 
$K = 1$ or $K = 2$, where estimation is inherently unstable. Figure ~\ref{fig:bias_distribution} shows the bias with different K, it illustrates a significant growth trend as K increases.

\begin{figure}[H]
\centering
\includegraphics[width=0.4\textwidth]{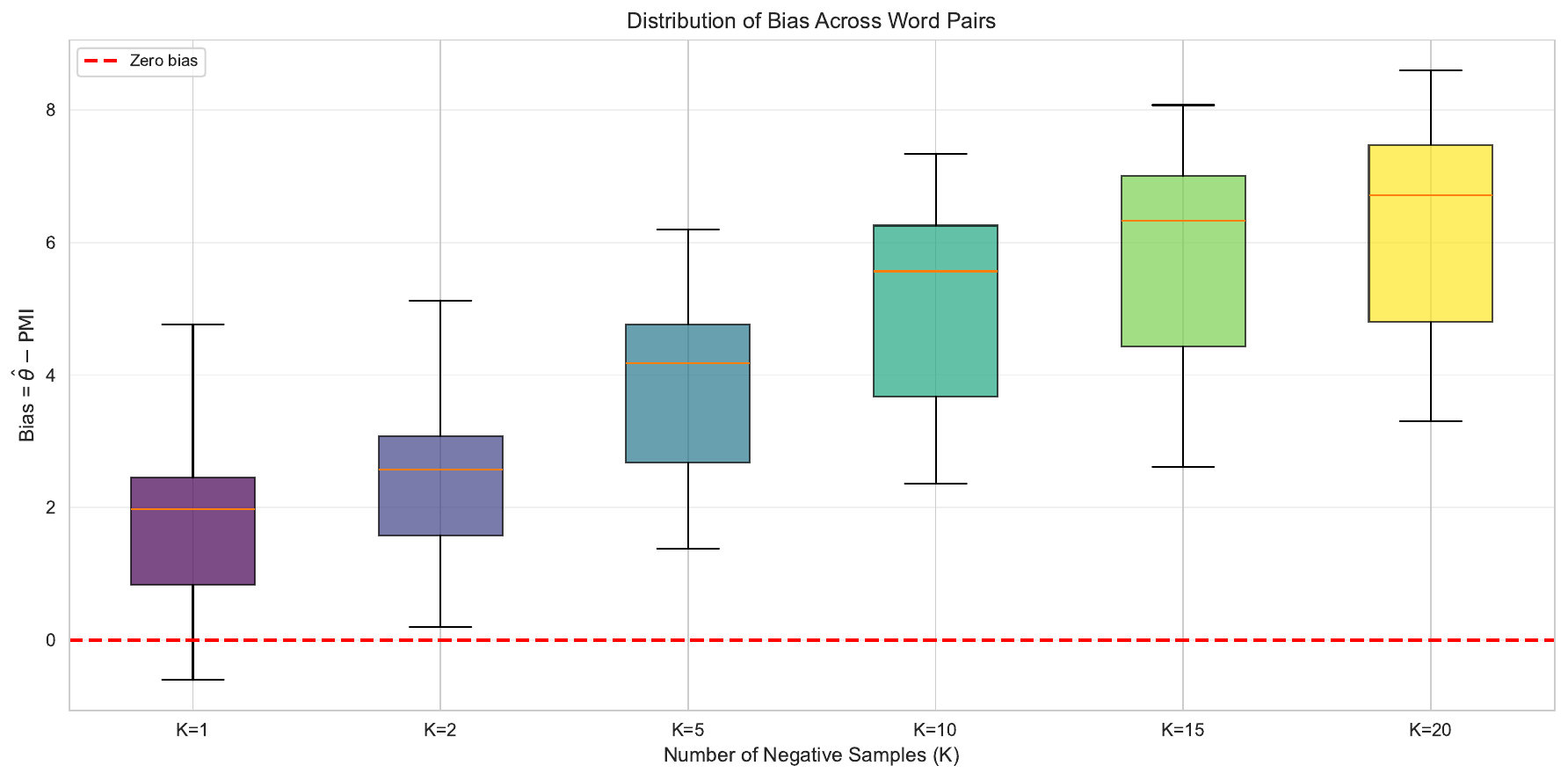}
\caption{Bias Distribution of Theoretical Formula}
\label{fig:bias_distribution}
\end{figure}

\textbf{Conclusion:} $\hat\theta_{\text{SGNS}}$ is systematically biased; 
the theoretical unbiasedness does not hold under practical low-dimensional 
training conditions.

\subsection{Regression Analysis}

To quantitatively characterize the empirical relationship between 
$\hat\theta_{\text{SGNS}}$, PMI, and $\log K$, we fit the linear model:

\[
\hat\theta_{\text{SGNS}} = A \cdot \text{PMI} + B \cdot \log K + C
\]

We compare three specifications: (1) the theoretical model with 
$A = 1, B = 1, C = 0$ (Levy \& Goldberg, 2014); (2) the empirical model 
with intercept ($C$ free); and (3) the empirical model without intercept 
($C = 0$ fixed). Table~\ref{tab:regression_comparison} presents the 
estimation results.

The coefficient of determination $R^2$ is defined as:
\[
R^2 = 1 - \frac{\text{SS}_{\text{res}}}{\text{SS}_{\text{tot}}}
\]
where $\text{SS}_{\text{res}} = \sum_i (y_i - \hat{y}_i)^2$ is the residual 
sum of squares, and $\text{SS}_{\text{tot}} = \sum_i (y_i - \bar{y})^2$ 
is the total sum of squares (proportional to the variance of the 
dependent variable). RMSE is the square root of MSE. $\text{MAE} = \frac{1}{n} \sum_i |y_i - \hat{y}_i|$, which treats all errors linearly and is more robust to outliers than RMSE. Lower values indicate better fit.
\begin{table}[H]
\centering
\small
\caption{Regression model comparison (transposed)}
\label{tab:regression_comparison}
\begin{tabular}{l c c c}
\toprule
\textbf{Metric} & \textbf{Theoretical} & \textbf{With Intercept} & \textbf{Without Intercept} \\
\midrule
A (PMI)     & 1.0000 & 0.7077 & 0.8046 \\
B (log K)   & 1.0000 & 1.5692 & 2.2174 \\
Intercept   & 0.0000 & 1.5692 & 0 (fixed) \\
R²          & $-0.6572$ & 0.6365 & 0.5030 \\
RMSE        & 2.9221 & 1.3685 & 1.6002 \\
MAE         & 2.5208 & 1.1265 & 1.3262 \\
\bottomrule
\end{tabular}
\end{table}

\subsubsection{Model Fit Comparison}

The theoretical model \cite{levy2014neural} yields a negative R² 
($-0.6572$), indicating that it fits the data worse than a simple 
mean baseline. In contrast, the empirical model with intercept 
achieves R² $= 0.6365$, with substantially lower RMSE and MAE. 
The model without intercept performs better than the theoretical 
model but worse than the intercept model (R² $= 0.5030$).

\subsubsection{Hypothesis Tests: coefficient}

We test whether each coefficient differs from its theoretical value 
($A = 1$, $B = 1$, $C = 0$). Table~\ref{tab:regression_tests} summarizes 
the results.
\begin{table}[H]
\centering
\caption{Hypothesis tests for regression coefficients (transposed)}
\label{tab:regression_tests}
\begin{tabular}{l c c}
\toprule
\textbf{Statistic} & \textbf{With Intercept} & \textbf{Without Intercept} \\
\midrule
\multicolumn{3}{c}{\textbf{Coefficient: A (PMI)}} \\
\quad Estimate      & $0.7077 \pm 0.1348$ & $0.8046 \pm 0.1560$ \\
\quad t-statistic   & $-2.17$             & $-1.25$             \\
\quad p-value       & $0.0320$            & $0.2129$            \\
\midrule
\multicolumn{3}{c}{\textbf{Coefficient: B (log K)}} \\
\quad Estimate      & $1.5692 \pm 0.1145$ & $2.2174 \pm 0.0719$ \\
\quad t-statistic   & $4.97$              & $16.94$             \\
\quad p-value       & $<0.001$            & $<0.001$            \\
\midrule
\multicolumn{3}{c}{\textbf{Coefficient: Intercept}} \\
\quad Estimate      & $1.5698 \pm 0.2336$ & --- \\
\quad t-statistic   & $6.72$              & --- \\
\quad p-value       & $<0.001$            & --- \\
\bottomrule
\end{tabular}
\end{table}

\textbf{Key findings:}
\begin{enumerate}
    \item The PMI coefficient $A$ is slightly less than 1 ($0.7077 \pm 0.1348$) 
          in the intercept model, indicating that SGNS attenuates the PMI signal.
    \item The log K coefficient $B$ is significantly greater than 1 
          ($p < 0.001$ in both models), confirming that the effect of negative 
          sampling is amplified by a factor of approximately 1.57–2.22.
    \item The intercept $C$ is significantly positive ($p < 0.001$), 
          indicating a systematic upward bias of approximatedly 1.57.
\end{enumerate}

\subsubsection{Residual Analysis for Theoretical Model}

The theoretical model exhibits systematic bias. The Residual analysis 
(Figure~\ref{fig:regression}) shows that the mean residual is $2.4932$ 
($p < 0.001$), with residuals ranging from $-0.5424$ to $5.5588$. 
The residuals show weak negative correlation with PMI ($r = -0.1768$) 
and moderate positive correlation with $\log K$ ($r = 0.3966$), 
suggesting that the theoretical model systematically underestimates 
$\hat\theta_{\text{SGNS}}$, especially for larger $K$.

\subsubsection{Visualization}

Figure~\ref{fig:regression} presents diagnostic plots for the theoretical 
model (top row) and the empirical model with intercept (bottom row).

\begin{figure*}[t]  
\centering
\includegraphics[width=0.8\textwidth]{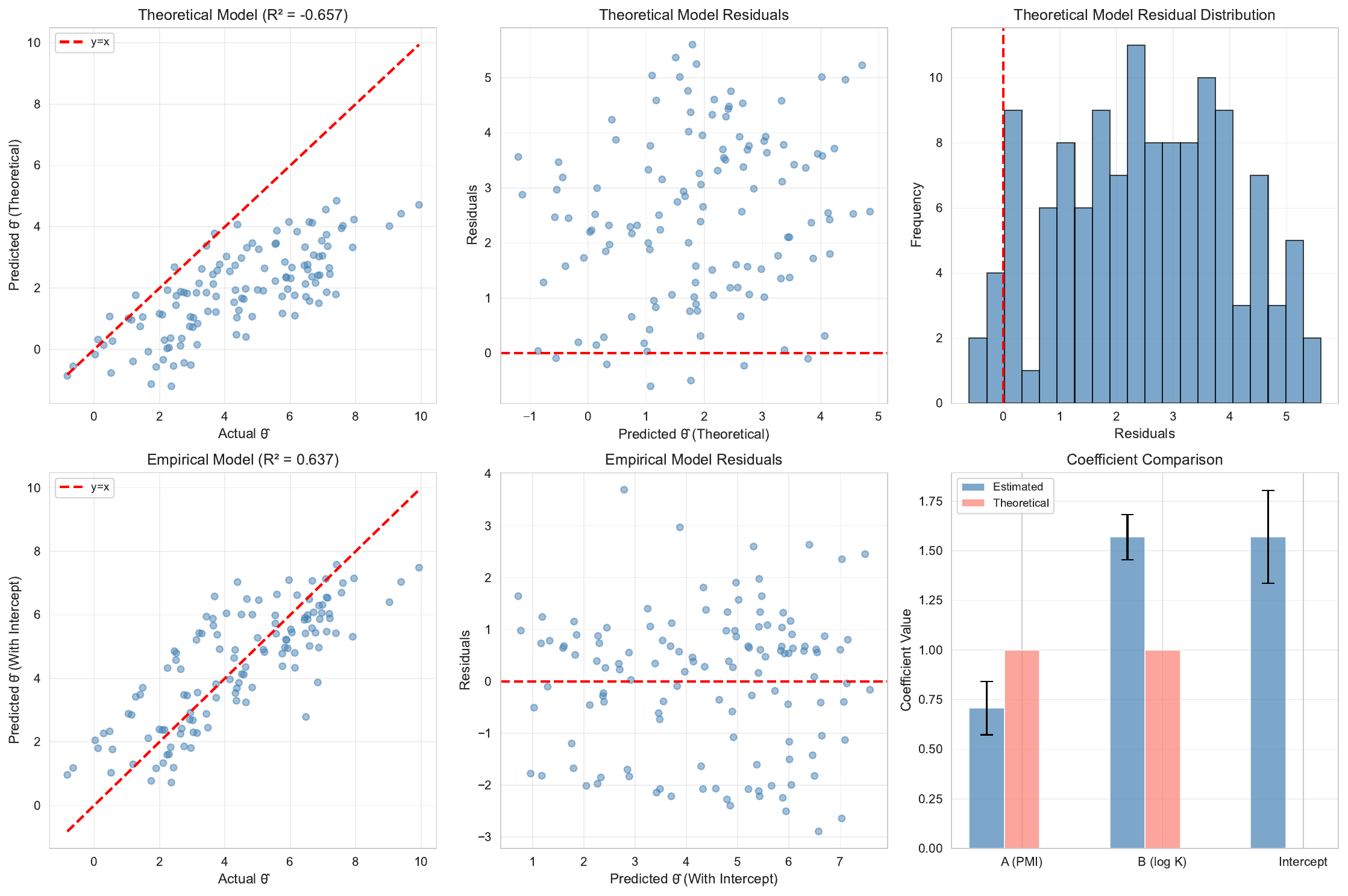}
\caption{Regression diagnostic plots. Top row: theoretical model 
($\hat\theta = \text{PMI} + \log K$). Bottom row: empirical model 
with intercept ($\hat\theta = 0.70 \cdot \text{PMI} + 1.57 \cdot \log K + 1.57$). 
Left: actual vs. predicted. Middle: residuals vs. predicted. 
Right: coefficient comparison.}
\label{fig:regression}
\end{figure*}
The top row reveals that the theoretical model systematically 
underestimates $\hat\theta_{\text{SGNS}}$, with residuals showing 
a clear positive bias. The bottom row demonstrates that the empirical 
model fits the data substantially better, with residuals randomly 
scattered around zero.

\subsubsection{Summary}

The regression analysis confirms our earlier hypothesis tests:

\begin{equation}
\boxed{\hat\theta_{\text{SGNS}} = 0.70 \cdot \text{PMI} + 1.57 \cdot \log K + 1.57}
\end{equation}

This empirical relationship deviates from the theoretical prediction 
($\hat\theta = \text{PMI} + \log K$) in three important ways:
\begin{enumerate}
    \item The PMI coefficient is attenuated ($A \approx 0.70$ vs. $1.00$);
    \item The log K coefficient is amplified ($B \approx 1.57$ vs. $1.00$);
    \item A positive intercept ($C \approx 1.57$) is necessary to achieve 
          good fit of the model.
\end{enumerate}

These results demonstrate that under low-dimensional training conditions 
($d = 100$), the effect of negative sampling is magnified beyond the 
theoretical expectation, and a systematic upward bias persists across 
all word pairs.


\section{Discussion}

\subsection{The Theoretical Contribution: Variance Decomposition for SGNS}

Before discussing the empirical discrepancies, we first articulate the 
theoretical contribution of this work: a variance decomposition strcture 
for SGNS-based PMI estimation\ref{sgns_var}.

\subsubsection{The Derivation}

Building on the asymptotic theory of the stochastic approximation Polyak-Juditsky\cite{polyak1992acceleration}, we derived the following decomposition:

\[
\boxed{\text{Var}(\hat\theta_{\text{SGNS}}) = \underbrace{\frac{e^{-\theta}}{N p_x p_y}}_{\text{Data variance}} + \underbrace{\frac{C(\theta)}{K}}_{\text{Optimization variance}}}
\]

This derivation makes two important theoretical contributions:

\begin{enumerate}
    \item \textbf{First application of Polyak-Juditsky to SGNS}: To our 
          knowledge, this is the first work to apply the stochastic 
          approximation CLT to SGNS, explicitly separating the variance 
          from data sampling and stochastic optimization.
    
    \item \textbf{Decomposition of uncertainty}: The decomposition reveals 
          that the total estimation variance has two irreducible components:
          \begin{itemize}
              \item \textbf{Data variance}: $\frac{e^{-\theta}}{N p_x p_y}$, 
                    which depends only on corpus statistics and the true PMI;
              \item \textbf{Optimization variance}: $\frac{C(\theta)}{K}$, 
                    which decays as $1/K$ and can be reduced by increasing 
                    negative samples.
          \end{itemize}
    
    \item \textbf{Practical implications}: The Decomposition provides 
          a theoretical justification for the common practice of increasing 
          $K$: it reduces optimization variance, but with diminishing 
          returns. It also quantifies the irreducible data variance that 
          cannot be reduced by algorithmic improvements.
\end{enumerate}

\subsubsection{Why the Decomposition Could Not Be Directly Validated}

The variance decomposition relies on a key assumption derived from 
Levy and Goldberg (2014)\cite{levy2014neural}:

\[
\mathbb{E}[\hat\theta_{\text{SGNS}}] = \text{PMI}(w,c)
\]

Our empirical tests rejected this assumption under low-dimensional 
conditions ($d = 100$). Consequently, the theoretical variance 
decomposition could not be directly validated because the estimator's 
mean depends on $K$, making variance comparisons across different 
$K$ values inappropriate.

\subsubsection{The Value of the Derivation Despite Empirical Violations}

We emphasize that the failure to validate the decomposition does 
\textbf{not} diminish and reject its theoretical value. Instead, it:

\begin{enumerate}
    \item \textbf{Identifies a critical assumption}: The derivation 
          makes explicit that unbiasedness is required for the 
          decomposition to hold. This clarifies a previously implicit 
          condition in the literature.
    
    \item \textbf{Provides a benchmark}: Even when assumptions are 
          violated, the decomposition gives a baseline for measuring actual deviation. The discrepancy 
          between the predicted and observed behavior (amplified slope, 
          positive intercept) quantifies the impact of disobeying 
          the unbiasedness assumption.
    
    \item \textbf{Guides future work}: The decomposition suggests 
          that if one can debias $\hat\theta_{\text{SGNS}}$ (e.g., 
          by subtracting the empirical bias or increasing dimension), 
          the variance structure may follow the predicted form.
\end{enumerate}

\subsection{Re-examining the Levy \& Goldberg (2014) Framework}

\subsubsection{The Theoretical Result and Its Ideal Conditions}

Levy and Goldberg (2014)\cite{levy2014neural} made a foundational contribution by proving that under ideal conditions — specifically, \textbf{sufficiently high embedding dimensionality} to allow perfect reconstruction of the PMI matrix — SGNS 
optimizes to $\vec{w}\cdot\vec{c} = \text{PMI}(w,c) - \log k$.

However, several important caveats were acknowledged in the original work 
and have been further elaborated by subsequent research:

\begin{enumerate}
    \item \textbf{Low-dimensional Insufficiency}: SGNS minimizes a \textbf{weighted} 
          reconstruction error, where frequent $(w,c)$ pairs receive higher 
          weight. This weighting scheme was explicitly noted in the original 
          paper but its implications for low-dimensional settings were not 
          fully explored\cite{levy2014neural}.
    
    \item \textbf{Non-identifiability}: Mu et al. (2018)\cite{mu2018revisiting}            pointed out a fundamental ambiguity in the SGNS objective: the solution          space contains infinitely many equivalent representations, and the               specific solution obtained depends on optimization details. This non-            identifiability means that the objective alone does not uniquely determine       the PMI estimate and the solution actually realized depends on factors           such as initialization, training duration, etc.
    
    \item \textbf{Logistic PCA interpretation}: Landgraf and Bellay                        (2017)\cite{landgraf2017word2vec}showed that SGNS is equivalent to               \textbf{weighted logistic PCA}, where the responses are $x_{w,c} =               \frac{P(w,c)}{P(w,c) + kP(w)P(c)}$. Under this interpretation, the effect        of $K$ is not merely a shift but also influences the weighting of                observations.
\end{enumerate}

\subsection{Why Our Findings Are Not Contradictory But Complementary}

Our results do not contradict Levy and Goldberg (2014)\cite{levy2014neural}; rather, they reveal the behavior of SGNS in a regime their analysis did not cover.

\subsubsection{The Role of Dimensionality}

Levy and Goldberg's derivation assumes $d$ large enough for perfect 
reconstruction. Our experiments used $d = 100$, which is substantially 
smaller than the effective rank of the PMI matrix. Table~\ref{tab:dimension_effect} 
from Levy and Goldberg (2014) shows that SGNS's objective deviation 
drops from $39.3\%$ at $d=100$ to $7.13\%$ at $d=1000$ for $k=5$, 
demonstrating that dimension significantly affects solution quality.

\begin{table}[H]
\centering
\caption{SGNS objective deviation by dimension (Levy \& Goldberg, 2014, Table 1)}
\label{tab:dimension_effect}
\begin{tabular}{c c c c c}
\toprule
\multirow{2}{*}{$k$} & \multirow{2}{*}{PMI $-$ log $k$} & \multicolumn{3}{c}{SGNS} \\
\cline{3-5}
                     & & $d = 100$ & $d = 500$ & $d = 1000$ \\
\midrule
1  & 0\%     & 31.4\% & 29.4\% & 7.40\% \\
5  & 0\%     & 39.3\% & 36.0\% & 7.13\% \\
15 & 0\%     & 7.80\% & 6.37\% & 5.97\% \\
\bottomrule
\end{tabular}
\end{table}
Thus, our observed deviations ($\beta \approx 1.57$ vs. $1.00$) may 
diminish as $d$ increases, a hypothesis that can be tested in future work.

\subsubsection{The Dual Role of k}

An important insight from the peer review of the original Levy and Goldberg paper concerns the dual role of the parameter $k$ in SGNS \cite[Reviewer 2, Point 2]{nips2014reviews}:

\begin{enumerate}
    \item \textbf{Shift role}: As shown in the theoretical derivation, 
          $k$ shifts the target PMI matrix by $-\log k$.
    
    \item \textbf{Approximation role}: In practice, word2vec does not 
          marginalize over the entire vocabulary but draws $k$ samples. 
          This serves as a Monte Carlo approximation of the expectation, 
          and increasing $k$ improves approximation quality.
\end{enumerate}

In their author response, Levy and Goldberg acknowledged this distinction, 
noting that while the parameter was likely intended to improve the 
approximation, their mathematical analysis reveals that it also acts as 
a regularizer for the ratio between positive and negative examples 
\cite[Author Feedback]{nips2014reviews}.

Our experiments suggest that the amplified slope ($\beta \approx 1.57$) may 
reflect the interplay between these two roles under low-dimensional 
conditions — an aspect not captured by the asymptotic theory.

\subsection{Limitations and Future Work}

\subsubsection{Limitations of the Present Experiments}

\begin{enumerate}
    \item \textbf{Corpus size}: The Brown Corpus ($\approx 1$M tokens) is 
          modest by modern standards. Our results may not generalize to 
          larger corpora (e.g., Wikipedia, Common Crawl).
    
    \item \textbf{Fixed dimension}: All experiments used $d = 100$. 
          Higher dimensions ($d = 300, 500, 1000$) may exhibit behavior 
          closer to the theoretical prediction.
    
    \item \textbf{Single corpus}: Results may be corpus-dependent. 
          Replication on additional corpora would strengthen conclusions.
    
    \item \textbf{Word pair selection}: While we sampled across frequency 
          and PMI types, the selected pairs may not fully represent the 
          diversity of word relationships.
\end{enumerate}

\subsubsection{Directions for Future Research}

\begin{enumerate}
    \item \textbf{Dimension scaling}: Systematically vary $d$ (e.g., 
          $d = 50, 100, 200, 500, 1000$) to characterize how coefficients 
          $A$, $B$, and $C$ approach their theoretical values.
    
    \item \textbf{Corpus scaling}: Repeat experiments on larger corpora 
          (text8, Wikipedia) to examine whether the amplified slope 
          persists with more data.
    
    \item \textbf{Theoretical characterization}: Derive closed-form 
          expressions for $A(d)$, $B(d)$, and $C(d)$ as functions of 
          dimension and corpus statistics.
    
    \item \textbf{Practical guidelines}: Based on empirical characterization, 
          provide guidance for practitioners on choosing $k$ and $d$ 
          to balance bias and computational efficiency.
\end{enumerate}


\section{Conclusion}

This paper provided a statistical framework for understanding PMI estimation 
and its connection to SGNS word embeddings, combining theoretical derivation 
with empirical validation.

\subsection{Summary of Contributions}

\textbf{Theoretical contributions.} 
We formally proved the consistency, asymptotic unbiasedness, and asymptotic 
normality of the empirical PMI estimator under i.i.d. sampling, and derived 
an explicit variance expression via the Delta method. Building 
on this foundation, we applied the stochastic approximation CLT\cite{polyak1992acceleration} to SGNS for the first time, deriving a variance 
decomposition that separates data sampling variance from optimization 
variance:
\[
\boxed{\operatorname{Var}(\hat\theta_{\text{SGNS}}) = \underbrace{\frac{e^{-\theta}}{N p_x p_y}}_{\text{Data variance}} + \underbrace{\frac{C(\theta)}{K}}_{\text{Optimization variance}}}
\]

This decomposition reveals that the total estimation uncertainty has two 
irreducible components: one determined solely by corpus statistics and 
the true PMI, and another that decays as $1/K$ and can be reduced by 
increasing negative samples.

\textbf{Empirical Results.} 
Simulation experiments validated the Delta method approximation for PMI 
variance. Real-data experiments on the Brown Corpus ($d=100$) revealed 
that SGNS systematically deviates from the theoretical relationship 
$\hat\theta = \text{PMI} + \log K$ \cite{levy2014neural}. The empirical 
relationship is:

\[
\boxed{\hat\theta_{\text{SGNS}} \approx 0.70 \cdot \text{PMI} + 1.57 \cdot \log K + 1.57}
\]

Specifically:
\begin{itemize}
    \item The null hypothesis of unbiasedness was rejected for 96.3\% of test cases;
    \item The $\log K$ coefficient ($\hat\beta \approx 1.57$) significantly exceeds 
          the theoretical value $\beta = 1$ ($p < 0.001$), indicating amplification;
    \item A positive intercept ($\hat C \approx 1.57$, $p < 0.001$) confirms 
          systematic upward bias.
\end{itemize}

Critically, the failure to validate the variance decomposition under 
low-dimensional conditions does \textbf{not} diminish its theoretical value. 
Rather, it identifies the unbiasedness assumption as the key bottleneck, 
clarifies the gap between asymptotic theory and practice, and sets a reference point to measure the real-world deviation.

\subsection{Implications for Theory and Practice}

\textbf{For theory:} Our results demonstrate that the asymptotic result 
of Levy and Goldberg (2014) — while mathematically correct under ideal 
conditions (perfect reconstruction, infinite data) — does not directly 
generalize to practical low-dimensional training settings. The deviations 
we document (attenuated PMI coefficient, amplified $\log K$ effect, 
positive bias) characterize SGNS behavior in a previously under-explored 
regime, complementing rather than contradicting the original theory.

\textbf{For practice:} The variance decomposition provides a theoretical 
justification for the common practice of increasing $K$: it reduces 
optimization variance, but with diminishing returns. The empirical 
relationship suggests that practitioners should be aware that SGNS 
estimates under $d=100$ are systematically biased and sensitive to $K$, 
with bias growing approximately as $1.57\log K$.

\subsection{Final Remarks}

This study bridges statistical theory and empirical practice in word 
embedding estimation. The variance decomposition provides a new lens and perspective 
for understanding uncertainty in SGNS-based PMI estimation, while the 
empirical findings characterize the method's behavior under practical 
constraints. We hope this work encourages further dialogue between the 
statistical and NLP communities, leading to more principled use of 
word embedding methods in applications where uncertainty quantification 
matters.

\newpage
\bibliographystyle{plainnat} 
\bibliography{ref}


\end{document}